\renewcommand{\P}{P}
\newcommand{\PS}{P^{\ast}}
\newcommand{\ie}{i\/.\/e\/.,\/~}
\newcommand{\indep}{\perp\!\!\!\perp}
\newcommand{\capt}[1]{\mdseries{\emph{#1}}}
\newtheorem*{proposition*}{Proposition}
\newtheorem*{corollary*}{Corollary}
\DeclareMathOperator*{\argmax}{arg\,max}
\tikzset{
    auto,node distance =2 cm and 2 cm,semithick,
    state/.style ={ellipse, draw, minimum width = 0.3 cm},
    point/.style = {circle, draw, inner sep=0.04cm,fill,node contents={}},
    bidirected/.style={Latex-Latex,dashed},
    el/.style = {inner sep=2pt, align=left, sloped}
}
\tikzset{arrow/.style={-stealth, thick, draw=gray!80!black}}
\title[Transfer learning through causal transportability]{Transfer Learning in Latent Contextual Bandits with Covariate Shift Through Causal Transportability}
\begin{document}

\maketitle

\begin{abstract}%
Transferring knowledge from one environment to another is an essential ability of intelligent systems. 
Nevertheless, when two environments are different, %effective knowledge transfer is challenging.
naively transferring all knowledge may deteriorate the performance, a phenomenon known as \emph{negative transfer}.  In this paper, we address this issue within the framework of multi-armed bandits from the perspective of causal inference. Specifically, we consider transfer learning in \emph{latent contextual bandits}, where the actual context is hidden, but a potentially high-dimensional proxy is observable. 
We further consider a \emph{covariate shift} in the context across environments. We show that naively transferring all knowledge for classical bandit algorithms in this setting led to negative transfer. 
We then leverage \emph{transportability} theory from causal inference to develop algorithms that explicitly transfer effective knowledge for estimating the causal effects of interest in the target environment. Besides, we utilize variational autoencoders to approximate causal effects under the presence of a high-dimensional proxy. We test our algorithms on synthetic and semi-synthetic datasets, empirically demonstrating consistently improved learning efficiency across different proxies compared to baseline algorithms, showing the effectiveness of our causal framework in transferring knowledge.
\end{abstract}

\begin{keywords}%
  Transfer learning, transportability, contextual bandits, causal inference%
\end{keywords}

\section{Introduction}

% Sequential decision-making problems are often formulated as contextual bandit problems when side information is available to the decision-maker. In these problems, an agent seeks to find a policy that maximizes the expected cumulative reward by sequentially selecting an action among a set of available actions. %A common approach to this problem involves balancing the exploration and exploitation of the reward distributions across different action-context pairs, which%
% Typical methods require collecting a large amount of data from the environment, especially when the side information is high-dimensional and serves merely as a proxy of the latent context that directly influences the reward. %In the real-world scenario, gathering such extensive data is often impractical. This challenge is further compounded when the side information is high-dimensional and serves merely as a proxy for a latent variable that directly influences the reward—a situation that frequently arises in practice. Thus, improving sample efficiency is critical.
% For example, assuming we have some available policies for driving, training an autonomous driving model to select from these policies for different weather using only camera observations would require collecting lots of data from the environment \citep{zhang2023perception}. 

Consider an autonomous vehicle equipped with several pre-trained driving policies, each optimized for different weather conditions. 
We want to train an agent to select the driving policy that maximizes the expected cumulative reward based on camera images.
Thus, it must infer the relevant weather conditions from these high-dimensional images. 
Typically, this would require extensive data collection from the environment, which, in real-world experiments with the car, is time-consuming and costly \citep{zhang2023perception}. 
Therefore, data-efficient algorithms are needed.
The car example can be formulated as a latent contextual bandit problem, where the agent only has access to a high-dimensional proxy variable instead of the actual context.
%In this paper, we propose leveraging \emph{transportability} to transfer knowledge, thereby improving sample efficiency in such problem settings.

One approach to improve the sample efficiency is transfer learning, which allows the agent to reuse knowledge gained from a related environment to accelerate the learning process in the target environment. The expectation is that the transferred knowledge can serve as a warm start for the agent to converge to the optimal policy (e.g., the optimal selection of driving policy in the example above) faster. However, transferring knowledge may lead to \emph{negative transfer} \citep{wang2019characterizingavoidingnegativetransfer} if the environment difference is too large. For instance, if we reuse the knowledge of driving a car from a dataset collected in the summer but want to train a model to drive in the winter, the model may initially be too aggressive for the winter conditions. %Further, in this example, another problem arises: humans choose their driving policies based on the low-dimensional context, e.g., weather, rather than the raw observation from the camera. This makes the latent context a confounder in the dataset. Directly transferring the knowledge of driving based on raw observation does not reflect the causal effect of driving policies on driving performance in the dataset.

Unlike earlier works on transfer learning in bandit problems~\footnote{A more detailed review of related work is provided in Appendix \ref{Appd:related work}.} \citep{deshmukh2017multi, zhang2017transfer, liu2018transferable, cai2024transfer, bellot2024transportability}, this paper focuses on tackling the challenge of negative transfer in \emph{latent contextual bandits} \citep{zhou2016latent, sen2017contextual}. %This allows us to avoid transferring the biased causal effect of actions on the reward.
Specifically, the objective is to define a contextual bandit agent that can improve sample efficiency by exploiting a dataset collected in another environment, given the awareness of the potential environmental discrepancy, while avoiding negative transfer.  In causal inference, the study of \emph{transportability} \citep{pearl2011transportability, bareinboim2012transportability,bareinboim2016causal} focuses on determining what knowledge can be transferred from one environment to infer causal effects in another, where the two environments are assumed to be different. Our approach employs the \emph{transport formula} derived from the transportability theory to explicitly determine the knowledge we can transfer from the data. Furthermore, to address the high-dimensional proxy, our approach builds on \emph{Variational Autoencoders (VAEs)}, which are well-suited for capturing the structure of latent variables across various high-dimensional datasets \citep{hu2017toward, lin2020anomaly,peng2021generating} and infer the causal effect of interventions \citep{louizos2017causal}. 

\vspace{1mm}\noindent\textbf{Contributions.} We propose a novel latent contextual bandit learning algorithm that can transfer knowledge across different environments based on the transport formula and improve sample efficiency. We test our method on several synthetic and semi-synthetic datasets,  including settings where images serve as high-dimensional proxies. Our results show that the proposed method successfully avoids negative transfer compared to methods that naively extract knowledge from the data collected in another environment. Also, it consistently achieves higher sample efficiency over methods that do not consider any prior data.   

\section{Problem Formulation and Background}

We consider a transfer learning setting for a latent contextual bandit problem, in which an agent seeks to minimize its cumulative regret in the target environment given a dataset collected from another environment. In this section, we introduce the concrete problem setting and necessary background to build our method. Then, based on the theory of transportability, we formalize the problem.

Throughout the paper, we use capital letters to denote random variables and small letters to denote their realizations, e.g., $X=x$. For sets of random variables, we use bold capital and small letters to denote them and their realizations, e.g., $\boldsymbol{X}=\boldsymbol{x}$. A distribution of a random variable $Y$ is denoted by $\P_Y$, and its conditional distribution is denoted by $\P_{Y\vert x}$, an abbreviation of $\P_{Y\vert X=x}$. The probability of a specific value $x$ taken by $X$ is denoted by $p(x)$, an abbreviation of $p(X=x)$.
\subsection{Problem Setting}
\label{problem formulation}

We start by considering a contextual bandit problem in which the true context variable is latent, but we have access to its proxy. Specifically, let the context variable be $Z \in \mathcal{Z} \subseteq \mathbb{R}^{d_1}$ and the proxy variable $W \in \mathcal{W} \subseteq \mathbb{R}^{d_2}$, where $d_1 \ll d_2$.  At time step $t$, the environment samples $z_t$ from a marginal distribution $\P_Z$ and $w_t$ from a conditional distribution $\P_{W\vert z_t}$, but only $w_t$ is revealed to the agent. From the observed $w_t$, the agent is required to select an arm $x_t \in \mathcal{X} = \{1,\ldots, K\}$% according to some policy $\mu: \mathcal{O} \rightarrow \mathcal{X}$%
. In return, it receives a reward $y_t \in \mathcal{Y} \subseteq \mathbb{R}$. Different from the standard contextual bandit setting \citep{langford2007epoch}, where the reward is sampled from a conditional distribution $P_{Y\vert w_t, x_t}$, the reward $y_t$ here is drawn from $\P_{Y\vert z_t,x_t}$. Let the optimal reward at step $t$ be $y^{\ast}_t$. %$u^{\ast}_t$%.
 The objective of the agent is to minimize the expected cumulative regret
\begin{equation}
  \mathbb{E}[R_T] = \sum_{t=0}^{T}\mathbb{E}[y^{\ast}_t] - \mathbb{E}[y_t],
\end{equation}
where $T$ is the total number of time steps. % A classic way to solve such a problem is to estimate the reward distribution $P_{Y\vert Z, X}$ and then choose the optimal arm $\tilde{x}=\mathrm{\argmax}_{x \in \mathcal{X}}\mathbb{E}(Y\vert z, x)$. However, $Z$ is assumed unobserved in this problem. One may instead estimate the reward distribution given the proxy, i.e., $P_{Y\vert W,X}$ and choose the optimal arm $\tilde{x}=\mathrm{\argmax}_{x \in \mathcal{X}}\mathbb{E}(Y\vert w, x)$. Nevertheless, given the high-dimensional nature of the proxy, this method is not sample-efficient.  

We then extend the problem to a transfer learning setting, where we have two domains: the source domain $\pi$ and the target domain $\pi^{\ast}$. The agent is given a fixed prior dataset collected from  $\pi$. It includes $W, Z, X, Y$, and the arm $X$ is selected based on $Z$.
%It contains $N$ rounds of interactions of another agent that can observe the context variable $Z$ and its proxy $W$ at the same time, and its selection of the arm $X$ is based on the context variable $Z$. %
Formally, we denote the dataset to be $D := \{(w_i, z_i, x_i, y_i)\}^{N}_{i=1}$.  Different from the \textit{offline contextual bandit} setting \citep{strehl2010}, where $\pi^{\ast}$ is assumed identical to $\pi$, we assume a \textit{covariate shift} \citep{sugiyama2007covariate} on the latent context variable $Z$ between the domains. 
\begin{assumption} [Covariate shift on bandit with latent context]
  \label{covariate shift}
  Given two domains~$\pi$ and~$\pi^{\ast}$, the marginal distribution of the latent context variable is different in the two domains, i.e., $\P^{\pi}_Z \neq \P^{\pi^{\ast}}_{Z}$, while the conditional distributions that generate the proxy~$W$ and the reward~$Y$ remain the same, i.e., $\P^{\pi}_{W\vert Z}=\P^{\pi^{\ast}}_{W\vert Z}$ and $P^{\pi}_{Y\vert Z, X}=P^{\pi^{\ast}}_{Y\vert Z, X}$.    
\end{assumption}

Under this assumption, the objective of the agent is to minimize the cumulative regret in the target domain more efficiently by leveraging the prior dataset $D$.

\subsection{Background}
Our proposed method builds upon the framework of \textit{structural causal models} and transportability theory, which we introduce in the following.

\subsubsection{Structural Causal Models}
A structural causal model (SCM) $M$ is a tuple $M=\langle\boldsymbol{V},\boldsymbol{U}, \mathcal{F}, \P_{\boldsymbol{U}} \rangle$, where $\boldsymbol{V}$ is a set of endogenous variables that can be observed, $\boldsymbol{U}$ is a set of exogenous variables that are unobservable, $\mathcal{F}=\{f_1,\ldots,f_{|\boldsymbol{V}|}\}$ is a set of functions that represent the causal mechanisms generating $X_i$ from its parents $\boldsymbol{Pa}_i \subseteq (\boldsymbol{V} \cup \boldsymbol{U}) \setminus \{{X_i} \}$,  i.e., $X_i \gets f_i(\boldsymbol{Pa}_i)$, and $\P_{\boldsymbol{U}}$ is a joint distribution over the exogenous variables $\boldsymbol{U}$.

An SCM entails both an observational distribution and an \textit{interventional distribution} \citep{Elements}.  An intervention is defined as an operation that fixes values of a subset $\boldsymbol{X}\subset \boldsymbol{V}$ to a constant $\boldsymbol{x}$, denoted by $do(\boldsymbol{x})$, an abbreviation of $do(\boldsymbol{X}=\boldsymbol{x})$, which replaces functions $\{f_i: X_i \in \boldsymbol{X}\}$ %not a proper expression%
that determine values of $\boldsymbol{X}$ in the original SCM $M$ and generates a post-intervention SCM $M_{\boldsymbol{x}}$. The entailed distribution over $\boldsymbol{V}$ in $M_x$ is the interventional distribution induced by $do(\boldsymbol{x})$, denoted by $P_{\boldsymbol{V}_{\boldsymbol{x}}}\coloneqq P_{\boldsymbol{V} \vert do(\boldsymbol{x})}$. 

Every SCM $M$ has an associated \textit{causal graph} $\mathcal{G}=(\boldsymbol{V}, \mathcal{E})$, which is a \textit{directed acyclic graph} (DAG). Let $\boldsymbol{V_i} = \boldsymbol{Pa}_i \subseteq \boldsymbol{V}$ and $\boldsymbol{U_i} = \boldsymbol{Pa}_i \subseteq \boldsymbol{U}$. Given an SCM $M$, if $X_i \subseteq \boldsymbol{V}_j$, %how to make sure i is not equal to j%
 we draw a directed edge from $X_i$ to $X_j$ ($X_i \rightarrow X_j$) in a corresponding causal graph $\mathcal{G}$. If $X_i \subseteq \boldsymbol{U}_j$, we draw a dashed edge from $X_i$ to $X_j$. Fig.~\ref{fig:sourceG} and Fig.~\ref{fig:targetG} show corresponding causal graphs for the data-generating process of the latent contextual bandit in the source domain and the target domain, respectively. The causal graph of a post-intervention SCM $M_{\boldsymbol{x}}$ induced by $do(\boldsymbol{x})$ is similar to the original SCM $M$, except that the incoming edges of nodes associated with $\boldsymbol{X}$ are removed.

 \subsubsection{Transportability}
%We are considering a covariate shift between the source domain $\pi$ and the target domain $\pi^{\ast}$, and it is important to identify what knowledge we should transfer from the source domain to minimize the regret in the target domain efficiently.  
In the causal inference literature, the problem of identifying causal effects with potential domain discrepancy is studied under the theory of transportability. These works focus on identifying whether a causal effect can be estimated across domains and what can be transferred. Formally, the transportability is defined
\begin{definition}[Transportability] \citep{bareinboim2012transportability}
  \label{transportability}
  Let $\pi$ and $\pi^{*}$ denote two domains, characterized by two observational distributions $\P_{\boldsymbol{V}}$, $\PS_{\boldsymbol{V}}$, and two causal graphs $\mathcal{G}$ and $\mathcal{G}^{\ast}$, respectively. A causal effect $p(y \vert  do(x))$ is transportable from $\pi$ to $\pi^{\ast}$ if $p(y \vert  do(x))$ can be estimated from a set of interventions $\mathcal{I}$ on $\pi$, and $p^{\ast}(y \vert  do(x))$ can be identified from $\P_{\boldsymbol{V}}$, $\PS_{\boldsymbol{V}}$, $\mathcal{G}$, $\mathcal{G}^{\ast}$ and $\mathcal{I}$.
\end{definition}
Note, in Definition~\ref{transportability}, $p(y \vert  do(x))$ is identified from the source domain $\pi$, which is not necessarily the same as $p^{\ast}(y \vert  do(x))$ in the target domain $\pi^{\ast}$. If it is, we say $p^{\ast}(y\vert do(x))$ is \text{directly transportable}. To check the transportability of a causal effect, the first step is to encode the domain discrepancy into a \textit{selection diagram} $\mathcal{D}$ by adding a \textit{selection node} pointing at the node that has a potential structural discrepancy. Fig.~\ref{fig:SD} shows a selection diagram encoding the domain discrepancy under Assumption~\ref{covariate shift}. Given the selection diagram between two domains, one can identify whether a causal effect is transportable by applying the $do$-calculus \citep{pearl2009causality}  on the causal effect and the corresponding selection diagram. If the causal effect can be transformed into an expression that only includes available interventional distributions and observational distributions in the source domain and the observational distribution in the target domain, we say the causal effect is transportable. The resulting expression of the causal effect is called a transport formula. 

%\begin{definition}[Selection Diagram]
%Let $\langle M, M^{\ast} \rangle$ be a pair of SCMs associated to domains $\langle \pi, \pi^{\ast} \rangle$ that share the same causal graph $\mathcal{G}$. A selection diagram $\mathcal{D}$ can be constructed as follows:
%\begin{itemize}
%  \item Every edge in $\mathcal{G}$ is an edge in $\mathcal{D}$;
%  \item Selection diagram $\mathcal{D}$ contains an extra edge $S_i \rightarrow X_i$ whenever there might exist a discrepancy $f_i \neq f_i^\ast$ or $\P_{U_i}\neq \PS_{U_i}$ between $M$ and $M^{\ast}$ $ \forall X_i \in \boldsymbol{V}$.
%\end{itemize}
%\end{definition}

%Here, $S_i$ is called the selection node, which indicates that there might be a structural discrepancy between the two domains. The absence of a selection node pointing to $X_j \in \boldsymbol{V}$ represents the assumption that the causal mechanism responsible for assigning a value to $X_j$ is identical between two domains.

\begin{figure}[t]
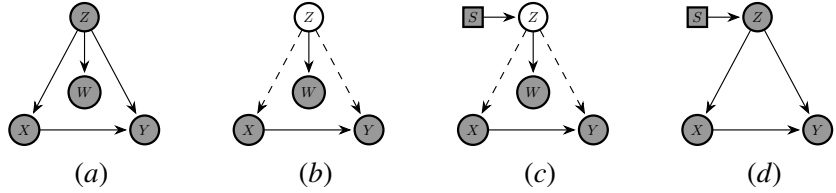

  \floatconts
  {fig:example2}% label for whole figure
  {\caption{Causal Graphs of the data-generating processes. \capt{Gray nodes are observable and white nodes are unobservable. Graphs in ($a$) and ($b$) depict causal graphs in the source domain and the target domain, respectively. The graph in ($c$) is the corresponding selection diagram for the two domains: the selection node $S$ points to $Z$ since $\P_{Z}\neq \PS_{Z}$. The graph in ($d$) is a selection diagram encoding the domain discrepancy of an offline contextual bandit under a covariate shift.}}}% caption for whole figure
  {%
  \subfigure{%
  \label{fig:sourceG}% label for this sub-figure
    \includeteximage{tikz/source.tikz}
  }\qquad % space out the images a bit
  \subfigure{%
  \label{fig:targetG}% label for this sub-figure
    \includeteximage{tikz/target.tikz}
    }\qquad
    \subfigure{%
    \label{fig:SD}% label for this sub-figure
      \includeteximage{tikz/SD.tikz}
      }\qquad
    \subfigure{%
  \label{fig:SD_}% label for this sub-figure
    \includeteximage{tikz/SD_.tikz}
    }
  }
  \end{figure}

\subsection{Transportability of Bandits with Latent Contexts}
\label{sec:2.3}
Building on the preliminary discussion of transportability, we can formulate the problem described in Section~$\ref{problem formulation}$ within this framework. Recall that the agent’s goal is to minimize the cumulative regret in the target domain while leveraging access to a dataset collected from the source domain. In this context, the causal effects of an arm selection conditioned on observing the proxy $w$ are $p(y \vert w, do(x))$ and $p^{\ast}(y \vert  w, do(x))$ in the source and target domains. We refer to this causal effect as a $w$-specific causal effect throughout the paper. Following the definition by \cite{bellot2024transportability}, the problem outlined in Section $\ref{problem formulation}$ can be formulated formally as follows: let $M$ and $M^{\ast}$ denote the SCMs that characterize the source domain $\pi$ and the target domain $\pi^{\ast}$, and $P_{W, Z, X, Y}$ and $\PS_{W, Z, X, Y}$ the distributions entailed by $M$ and $M^{\ast}$. Given the selection diagram $\mathcal{D}$ in Fig.~$\ref{fig:SD}$ that characterizes the domain discrepancy between $\pi$ and $\pi^{\ast}$ and a prior dataset $D \sim \P_{W,Z,X,Y}$, at each step $t=1,\ldots, T$ in the target domain, the agent observes a proxy $w_t \in W $ generated from the underlying context $z_t \in Z$. It then takes an action $x_t \in X$ and observes a reward $y_t \in Y$. The objective of the agent is minimizing the expected cumulative regret in $\pi^{\ast}$, i.e., minimizing
  \begin{equation}
    \label{eq:expected regret}
    \mathbb{E}_{\pi^{\ast}}[R]\coloneqq \sum_{t=1}^{T}\mathbb{E}_{\pi^{\ast}}[Y\vert w_t, do(\tilde{x}_t)] - \mathbb{E}_{\pi^{\ast}}[Y\vert w_t, do(x_t)],
  \end{equation} 
  where $\tilde{x}_t=\mathrm{\argmax}_{x \in \mathcal{X}}\mathbb{E}_{\pi^{\ast}}[Y\vert w_t, do(x)]$. When the proxy variable $W$ is high-dimensional, calculating $\mathbb{E}_{\pi^{\ast}}[Y\vert w_t, do(\tilde{x}_t)]$ analytically is infeasible. We therefore replace it with $\mathbb{E}_{\pi^{\ast}}[Y\vert z_t, do(\tilde{x}_t)]$ in~\eqref{eq:expected regret} to denote the optimal reward in this case. Nevertheless, this results in an ill-posed regret definition, since $\mathbb{E}_{\pi^{\ast}}[Y\vert z_t, do(\tilde{x}_t)]$ always yields a higher return than $\mathbb{E}_{\pi^{\ast}}[Y\vert w_t, do(x_t)]$.
  As the agent cannot close this gap between $\mathbb{E}_{\pi^{\ast}}[Y\vert z_t, do(\tilde{x}_t)]$ and $\mathbb{E}_{\pi^{\ast}}[Y\vert w_t, do(x_t)]$, the best it can do is achieving a linear regret. %In this case, 
  Therefore, with the high-dimensional proxy, we evaluate the value of $\mathbb{E}_{\pi^{\ast}}[R]$ after $T$ steps as the total cumulative regret, rather than evaluating $\mathbb{E}_{\pi^{\ast}}[R]$ per step, which is the conventional evaluation metric for bandit algorithms.
 
From~\eqref{eq:expected regret}, we can see that with an accurate estimation of the casual effect~$\mathbb{E}_{\pi^{\ast}}[Y\vert w_t, do(x)]$ or~$p^{\ast}(y\vert w_t, do(x))$, the expected cumulative regret can be minimized. Hence, our main focus in this paper is estimating this quantity using the prior data and the observational distribution in the target domain.

\section{Estimating Causal Effects with Transportability}
In this section, we derive two algorithms to estimate the causal effect  $p^{\ast}(y\vert w, do(x))$  using prior data while avoiding negative transfer. One algorithm addresses a simple binary model, and the other generalizes the method to a high-dimensional proxy variable. We begin by highlighting the challenges of latent contexts and demonstrating negative transfer in common bandit algorithms. 
%For the binary model, we propose Alg.~\ref{alg:binary}, which transfers knowledge from the source domain while learning from target domain observations. We then extend this approach to a high-dimensional proxy setting using VAEs, leading to Alg.~\ref{alg:general}.

%In this section, we derive two algorithms to estimate the causal effect $p^{\ast}(y\vert w, do(x))$ leveraging the prior data and avoid the negative transfer. One for a simple binary model and another for a more generalized proxy variable, where the proxy is assumed to be high dimensional. We start by illustrating the challenge of latent contexts and showing the negative transfer for common bandit algorithms under the setting we described in Section~\ref{sec:2.3}. Then, we consider a binary model and propose Algorihtm~\ref{alg:binary} to transfer knowledge from the source domain and learn from the observation in the target domain. Lastly, we generalize the method we used in the binary model to a high-dimensional proxy variable setting. By combining with VAEs that can capture the latent structure of the data-generating process in high-dimensional data, we propose Algorithm~\ref{alg:general} to transfer knowledge from the prior data and learn from the observations in the target domain.
\subsection{Challenge of Negative Transfer}
\label{sec:challenge on unobserved context}
We first consider an offline contextual bandit setting, where the context is observable in both domains. Based on this assumption, one can draw a selection diagram of this setting under covariate shift, which is shown in Fig.~\ref{fig:SD_}. Since the agent can observe the context $Z$ in both domains, it can evaluate $\tilde{x}=\mathrm{\argmax}_{x\in \mathcal{X}}\mathbb{E}_{\pi^{\ast}}[Y \vert z, do(x)]$ from the dataset $D$ collected in the source domain. Therefore, the problem is equivalent to identifying the causal effect $\mathbb{E}_{\pi^{\ast}}[Y \vert z,do(x)]$ from the observational distribution $\P_{X,Y,Z}$. Here, the causal effect of interest is directly transportable, and the optimal policy learned in the source domain can thus be directly transferred to the target domain.
\begin{proposition}
  \label{prop1}
  For an offline contextual bandit under covariate shift, the $z$-specific causal effect $p(y\vert z, do(x))$, where $z\in \mathcal{Z}$, $x \in \mathcal{X}$ and $y \in \mathbb{R}$, is directly transportable. The transport formula of $p^{\ast}(y\vert z, do(x))$ is
  \begin{equation}
    \label{eq:TF1}
    p^{\ast}(y\vert z, do(x))=p(y\vert z, x).
  \end{equation}
\end{proposition}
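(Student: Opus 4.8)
The plan is to establish the proposition in two stages: first I would identify the source-domain interventional quantity $p(y\vert z, do(x))$ as an observational conditional, and then show that this expression is domain-invariant, so that it also equals $p^{\ast}(y\vert z, do(x))$. Both stages are applications of the $do$-calculus to the selection diagram in Fig.~\ref{fig:SD_}, which I denote $\mathcal{D}$; its edges relevant to the query are $Z\to X$, $Z\to Y$, $X\to Y$, together with the selection edge $S\to Z$ (the proxy $W$ is a child of $Z$ that lies on no path between $X$ and $Y$, so it plays no role and may be ignored here).

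For the identification stage I would apply Rule~2 (action/observation exchange) to replace $do(x)$ by the plain observation $x$. The condition to check is $(Y \indep X \mid Z)$ in the graph $\mathcal{D}_{\underline{X}}$ obtained by deleting the edge leaving $X$. In that mutilated graph the only route connecting $X$ and $Y$ is the back-door path $X \leftarrow Z \to Y$, which is blocked once we condition on $Z$; hence the independence holds and $p(y\vert z, do(x)) = p(y\vert z, x)$. The identical argument applies in the target domain, yielding $p^{\ast}(y\vert z, do(x)) = p^{\ast}(y\vert z, x)$.

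For the transportability stage it remains to remove the dependence on the selection node, i.e.\ to show $p^{\ast}(y\vert z, x) = p(y\vert z, x)$. Writing the target conditional as the source conditional with the selection node activated, $p(y\vert z, x, s)$, I would invoke Rule~1 (deletion of an observation) and verify $(Y \indep S \mid Z, X)$ in $\mathcal{D}$. The only directed path from $S$ into $Y$ is $S \to Z \to Y$ (the alternative $S\to Z\to X\to Y$ is likewise routed through $Z$ and is additionally cut by conditioning on $X$), so conditioning on $Z$ blocks it and $S$ may be dropped. Equivalently, and more directly, $p(y\vert z, x)$ is governed entirely by the conditional reward law $\P_{Y\vert Z,X}$, which Assumption~\ref{covariate shift} declares invariant across domains; either route gives $p^{\ast}(y\vert z, x) = p(y\vert z, x)$. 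Chaining the two stages yields the transport formula $p^{\ast}(y\vert z, do(x)) = p^{\ast}(y\vert z, x) = p(y\vert z, x)$, and since the source effect equals the same observational conditional, the $z$-specific effect is directly transportable.

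The computations are routine once the diagram is fixed; the only point demanding care is the graph-surgery bookkeeping in the $do$-calculus, namely using the out-mutilated graph $\mathcal{D}_{\underline{X}}$ for the Rule~2 step and the unmutilated diagram for the Rule~1 deletion of $S$, and confirming in each case that every active path between the relevant variables is intercepted by the conditioning set $\{Z\}$ (together with $X$ when removing $S$). Because the covariate shift is localized at $Z$ and we condition on $Z$ directly, the back-door path and the selection path are blocked simultaneously, which is precisely what makes direct transportability hold without any re-weighting over the shifted marginal $\P_Z$.
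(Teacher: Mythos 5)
Your proof is correct and follows essentially the same route as the paper's: both arguments apply Rule~2 of the $do$-calculus with $(Y \indep X \mid Z)_{\mathcal{D}_{\underline{X}}}$ to reduce $do(x)$ to $x$, and Rule~1 with a $d$-separation of $Y$ from $S$ given $Z$ to drop the selection node. The only difference is the order of the two steps (the paper deletes $S$ first, in $\mathcal{D}_{\overline{X}}$, and then exchanges the action for an observation), which is immaterial since the required separation conditions hold in both orderings.
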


The proof can be found in Appendix \ref{appd:props}.
This proposition suggests that the transportability of the $z$-specific causal effect is trivial and formally justifies the use of supervised learning in offline contextual bandits under covariate shift. %need citations here%. 

One may consider substiting $Z$ with its proxy $W$ to derive the same result as in Proposition~\ref{prop1}, \ie~$p^{\ast}(y\vert w, do(x))=p(y\vert w, x)$. Therefore, concluding that the problem outlined in Section.~\ref{sec:2.3} is also straightforward to solve, as estimating $p(y\vert w,x)$ from the prior dataset $D$ is trivial. However, this conclusion is not valid.

\begin{proposition}
  \label{prop2}
  For the problem described in Section.~\ref{sec:2.3}, the $w$-specific causal effect $p^{\ast}(y\vert w,do(x))$ is not directly transportable and the transport formula of $p^{\ast}(y\vert w,do(x))$ is
  \begin{equation}
    \label{eq:TF2}
    p^{\ast}(y\vert w,do(x))=\sum_{z\in\mathcal{Z}}p(y\vert z,x)p^{\ast}(z\vert w).
    %the summation should probabaly change to a intergal as we assume Z is continuous%
  \end{equation} 
\end{proposition}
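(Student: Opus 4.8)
The plan is to derive the formula directly inside the target-domain post-intervention model $M^{\ast}_{x}$ by marginalizing over the latent context $Z$, and then to use the structure of the selection diagram in Fig.~\ref{fig:SD} to transport the reward term from the source.

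First I would apply the law of total probability over the latent $Z$ inside $M^{\ast}_{x}$,
\[
  p^{\ast}(y \mid w, do(x)) = \sum_{z \in \mathcal{Z}} p^{\ast}(y \mid z, w, do(x))\, p^{\ast}(z \mid w, do(x)),
\]
and then simplify each of the two factors using $d$-separation and the $do$-calculus on $\mathcal{D}$. For the first factor, in the intervened graph the only path between $W$ and $Y$ is $W \leftarrow Z \rightarrow Y$, which is blocked by $Z$, so $p^{\ast}(y \mid z, w, do(x)) = p^{\ast}(y \mid z, do(x))$; moreover, in the graph with the outgoing edges of $X$ deleted the only remaining $X$--$Y$ path is the backdoor $X \leftarrow Z \rightarrow Y$, again blocked by $Z$, so rule~2 of the $do$-calculus yields $p^{\ast}(y \mid z, do(x)) = p^{\ast}(y \mid z, x)$ (this is exactly the directly-transportable step underlying Proposition~\ref{prop1}). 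For the second factor, I would apply rule~3: with the incoming edges of $X$ deleted, the only $Z$--$X$ path is the collider $Z \rightarrow Y \leftarrow X$, which stays blocked when we condition only on $W$ (a non-descendant of $Y$), so $p^{\ast}(z \mid w, do(x)) = p^{\ast}(z \mid w)$. Combining the two factors gives $p^{\ast}(y \mid w, do(x)) = \sum_{z} p^{\ast}(y \mid z, x)\, p^{\ast}(z \mid w)$.

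Next I would transport the reward factor. Since the selection node $S$ points only to $Z$, the mechanism generating $Y$ from $(Z,X)$ is invariant across domains by Assumption~\ref{covariate shift}, hence $p^{\ast}(y \mid z, x) = p(y \mid z, x)$, a quantity estimable from the source dataset $D$. Substituting this into the previous display produces the claimed transport formula~\eqref{eq:TF2}, in which $p(y \mid z, x)$ comes from the source and $p^{\ast}(z \mid w)$ from the target. To establish that the effect is \emph{not} directly transportable, I would run the identical derivation in the source domain to obtain $p(y \mid w, do(x)) = \sum_{z} p(y \mid z, x)\, p(z \mid w)$; since Assumption~\ref{covariate shift} forces $\P_{Z} \neq \PS_{Z}$ and hence $p(z \mid w) \neq p^{\ast}(z \mid w)$ in general, the source and target $w$-specific effects differ, ruling out direct transportability and invalidating the naive substitution $p^{\ast}(y \mid w, do(x)) = p(y \mid w, x)$.

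The step I expect to be the main obstacle is the careful book-keeping of the $d$-separation and $do$-calculus applications on the selection diagram: in particular, verifying in the rule-3 step that the collider at $Y$ remains closed so that conditioning on $W$ does not reopen a $Z$--$X$ path, and checking that the selection node $S$ licenses mechanism sharing only for the factors it does not point to, so that $p^{\ast}(z \mid w)$ stays a genuinely target-domain object while $p(y \mid z, x)$ may be borrowed from the source.
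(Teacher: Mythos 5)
Your proof is correct and follows essentially the same route as the paper's: the same marginalization over $Z$, the same three do-calculus steps justified by the same $d$-separation facts ($(Y\indep W\mid Z)_{\mathcal{D}_{\overline{X}}}$ and $(Y\indep X\mid Z)_{\mathcal{D}_{\underline{X}}}$ for the reward factor, the closed collider at $Y$ for rule~3 on the $Z$-factor), and the same invariance of $P_{Y\mid Z,X}$ under Assumption~\ref{covariate shift} to borrow $p(y\mid z,x)$ from the source. The only difference is cosmetic: the paper carries the selection node $S$ in the conditioning set throughout and argues non-direct-transportability from the failure of $(Y\indep S\mid W)_{\mathcal{D}_{\overline{X}}}$ due to the open path $S\rightarrow Z\rightarrow Y$, whereas you derive the source-domain analogue of the formula and observe that the two effects differ whenever $p(z\mid w)\neq p^{\ast}(z\mid w)$; both arguments are acceptable at the paper's level of rigor.
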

The proof can be found in Appendix~\ref{appd:props}.
Proposition \ref{prop2} formally establishes that the $w$-specific causal effect is not directly transportable from the source domain. If one naively transfers $p(y \vert w, do(x))$ from the source domain as if it were $p^{\ast}(y \vert w, do(x))$, it can lead to a negative transfer. This is due to the fact that $p(z\vert w) \neq p^{\ast}(z \vert w)$ under the assumption of covariate shift on $Z$, \ie $p(z) \neq p^{\ast}(z)$. We simulate two experiments on the binary model and linear Gaussian model to show the negative transfer in Fig.~\ref{fig:negative transfer}. The results demonstrate a clear negative transfer for both models, where agents that naively transfer $p(y \vert w, do(x))$ (CTS$^{-}$ and LinUCB$^{-}$) from the source domain suffer a linear increase of regret while regrets for agents that directly learn from the target domain (CTS and LinUCB) increase sub-linearly. The experiment details can be found in Appendix~\ref{Appd: negative transfer}.
\begin{figure}
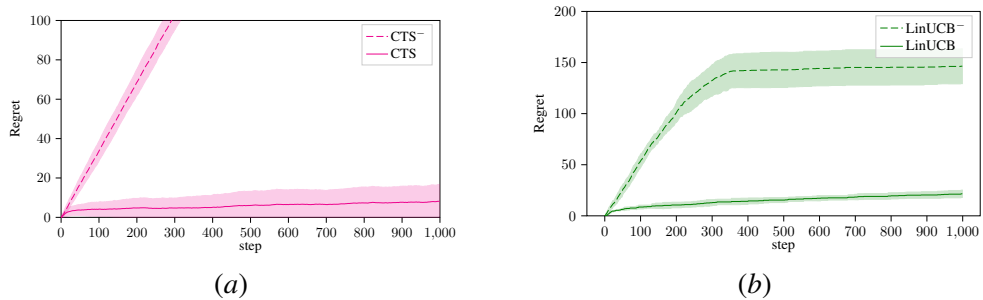

  \centering
  \subfigure{
    \label{fig:negative cucb}
  \includeteximage[width=0.4\textwidth]{tikz/challenge_binary.tex}}\qquad
  \subfigure{
    \label{fig:negative linucb}
  \includeteximage[width=0.4\textwidth]{tikz/challenge_linear.tex}}
  \caption{{Negative transfer for classic bandit algorithms. \capt{Fig.~($a$) shows negative transfer for binary model. Agents in this experiment utilize Thompson sampling \citep{thompson1933likelihood}. Fig.~($b$) shows the negative transfer for the Linear Gaussian model with a scalar latent context $Z$ and 5-dimensional proxy $W$. Agents in this experiment utilize LinUCB \citep{li2010contextual}.}}}
  \label{fig:negative transfer}
\end{figure}

Moreover, although~\eqref{eq:TF2} includes the quantity $p(y\vert z,x)$ that can be estimated from the prior dataset $D$, it is still not possible to fully compute~\eqref{eq:TF2}, as the posterior distribution $p^{\ast}(z\vert w)$ is not estimable without observing the context $Z$ in the target domain. Consequently, this raises concerns about the formulation of the problem in Section~\ref{sec:2.3} within the framework of transportability. However, in the next section, we will show that it is possible to recover the unknown posterior distribution from the observational distribution in the target domain. This allows us to leverage~\eqref{eq:TF2} to estimate the $w$-specific causal effect in the target domain using prior data.  

\subsection{Transfer Learning Guided by Transport Formula}
This section describes a transfer learning algorithm guided by the transport formula we derived in Proposition~\ref{eq:TF2}. We start by restoring the unknown posterior distribution $p^{\ast}(z\vert w)$.

\subsubsection{Restoring Unknown Posterior for Binary Model}

We first consider a binary model~\footnote{We notice the connection between the binary model and a transfer learning problem of \emph{bandits with unobserved confounders}. Interested readers are referred to Remark~\ref{connection to MAB with confounders} in Appendix~\ref{Appd: negative transfer}}. In this case, one can derive an analytic solution for computing the unknown posterior with Bayes' theorem
\begin{equation}
  \label{eq:bayes}
  p^{\ast}(z \vert w) = \frac{p^{\ast}(w\vert z)p^{\ast}(z)}{p^{\ast}(w)}.
\end{equation}
If we can estimate the three terms on the right-hand side, we can solve for the unknown $p^{\ast}(z\vert w)$. Under the assumption of covariate shift, we have $p^{\ast}(w\vert z)=p(w \vert z)$, where $p(w \vert z)$ can be estimated from the prior dataset $D$. The denominator, $p^{\ast}(w)$, can be directly computed from the observation of $W$, given that $W$ is binary. Therefore, the only term that can not be directly computed is $p^{\ast}(z)$. For the binary model, we have the following proposition
\begin{proposition}
 Let $Z$ and $W$ in Fig.~\ref{fig:SD} be binary. Then, the distribution of $Z$ in the target domain can be estimated by  
\begin{equation}
  \label{eq:p(z)}
  p^{\ast}(z=1) = \frac{p^{\ast}(w=1)-p(w=1\vert z=0)}{p(w=1\vert z=1)-p(w=1\vert z=0)}.
\end{equation}
\end{proposition}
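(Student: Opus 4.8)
The plan is to recover the target marginal $p^{\ast}(z=1)$ by inverting the law of total probability for the binary proxy $W$, using the covariate-shift assumption to replace the unknown target emission probabilities with quantities estimable from the source data $D$. No $do$-calculus is needed here, since only observational marginals of $Z$ and $W$ in the target domain are involved.

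First I would expand the observable target marginal $p^{\ast}(w=1)$ by conditioning on the binary latent variable,
\begin{equation}
  p^{\ast}(w=1) = p^{\ast}(w=1\vert z=1)\,p^{\ast}(z=1) + p^{\ast}(w=1\vert z=0)\,p^{\ast}(z=0).
\end{equation}
Since $Z$ is binary, I substitute $p^{\ast}(z=0)=1-p^{\ast}(z=1)$, so that $p^{\ast}(z=1)$ is the only unknown in the expression. Next, I invoke Assumption~\ref{covariate shift}, which gives $\P^{\ast}_{W\vert Z}=\P_{W\vert Z}$ and hence $p^{\ast}(w=1\vert z)=p(w=1\vert z)$ for each value of $z$; both conditionals are therefore replaced by the corresponding source-domain quantities, which can be estimated directly from $D$. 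This turns the display above into a linear equation in the scalar $p^{\ast}(z=1)$, and isolating that term yields exactly the claimed formula~\eqref{eq:p(z)}.

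The step I expect to require the most care is not the algebra but the identifiability condition implicit in the formula: the denominator $p(w=1\vert z=1)-p(w=1\vert z=0)$ must be nonzero. I would state this explicitly as a hypothesis, since it is precisely the requirement that the proxy $W$ be \emph{informative} about the latent context $Z$. When the two emission probabilities coincide, $W$ is statistically independent of $Z$, the map from $p^{\ast}(z=1)$ to $p^{\ast}(w=1)$ is constant, and the marginal $p^{\ast}(z)$ cannot be recovered from $p^{\ast}(w)$ alone. Under the nondegeneracy condition this linear map is invertible, so the solution is unique and the proposition follows immediately.
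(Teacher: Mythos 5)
Your proposal is correct and follows essentially the same route as the paper's derivation in the appendix: marginalize $p^{\ast}(w=1)$ over the binary $Z$, substitute $p^{\ast}(z=0)=1-p^{\ast}(z=1)$, invoke the covariate-shift invariance $\P^{\ast}_{W\vert Z}=\P_{W\vert Z}$, and solve the resulting linear equation. Your explicit nondegeneracy hypothesis that $p(w=1\vert z=1)\neq p(w=1\vert z=0)$ is a worthwhile addition that the paper leaves implicit.
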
   
The derivation of~\eqref{eq:p(z)} is based on the marginalization of $p^{\ast}(w=1)$; we provide the details in Appendix~\ref{Appd: Derivation for 6}. Note, the terms on the right-hand side of~\eqref{eq:p(z)} can be estimated either from the observation in the target domain or the prior dataset collected from the source domain.

Eq.~\eqref{eq:p(z)} provides a complete solution to~\eqref{eq:bayes}, thus restoring the unknown posterior $p^{\ast}(z\vert w)$ from the prior dataset and the observation in the target domain. Once the posterior is restored, we can calculate the $w$-specific causal effect by the transport formula derived in~\eqref{eq:TF2}. The concrete algorithm is given in Alg.~\ref{alg:binary}. However, when we drop the assumption of the binary model and consider a high-dimensional proxy $W$ governed by a flexible generating mechanism $\P_{W\vert Z}$, we do not have an analytic solution of~\eqref{eq:bayes}. Nevertheless, inspired by the Causal Effect Variational Autoencoder (CEVAE) \cite{louizos2017causal}, we will show how to approximate~\eqref{eq:bayes} for a high-dimensional proxy using VAEs in the next section.

\begin{table}[t]
\begin{minipage}[t]{0.47\textwidth}
\begin{algorithm2e}[H]
  \SetKwInOut{Input}{Input}
  \caption{Bandit with transportability for binary model.}
  \label{alg:binary}
  \Input{Prior dataset $D$}
  Estimate $p(w\vert z) \gets D$\\
  Estimate $p(y\vert x,z) \gets D$\\
  \For{$t\in \{1, \ldots, T\}$}{
    observe $w_t$\;
    Update $p^{\ast}(w)$\;
    Update $p^{\ast}(z)$ by~\eqref{eq:p(z)}\;
    \For{$x \in \{0,1\}$}{
      Compute expected reward $\mathbb{E}_{\pi^{\ast}}[Y\vert w_t, do(x)]$ by~\eqref{eq:TF2}\;
    }
    $x_t \gets \mathrm{\argmax}_{x}\mathbb{E}_{\pi^{\ast}}[Y\vert w_t,do(x)]$\;
    take $x_t$ and get $y_t$\;
  }
\end{algorithm2e}
\end{minipage}
%\end{table}
%\begin{table}[t]
\begin{minipage}[t]{0.5\textwidth}
\begin{algorithm2e}[H]
%\begin{multicols}{2}
  \SetKwInOut{Input}{Input}\
  \caption{Bandit with transportability for high-dimensional proxy.}

  \label{alg:general}
  \Input{Prior dataset $D$}
  Estimate $p_{\theta_1}(w\vert z) \gets D$\\
  Estimate $p_{\theta_2}(y\vert x,z) \gets D$ 

  Initialize replay buffer $\mathcal{B}$ with capacity $N$\\
  Initialize $\phi$ for the encoder $q_{\phi}(z \vert w)$ 

  \For{$t\in \{1, \ldots, T\}$}{
    observe $w_t$\;
    $\hat{z_t}$ $\gets$ $q_{\phi}(z \vert w_t)$\;
    $x_t$ $\gets$ $\mathrm{\argmax}_{x} p_{\theta_2}(y \vert x_, \hat{z_t})$\;
Take  $x_t$, get $y_t$, store ($w_t, x_t, y_t$) in $\mathcal{B}$\;
    \If{t is a gradient step}{
    $\mathbf{W}^M$ $\gets$ Sample a random mini-batch of $M$ data points from $\mathcal{B}$\;
    Perform gradient step on~\eqref{eq:ELBO transfer} with respect to $\theta_1$, $\theta_2$ and $\phi$ using $\mathbf{W}^M$\;
    }
  }
  % \end{multicols}
\end{algorithm2e}
\end{minipage}
\end{table}

\subsubsection{Restoring Unknown Posterior for High-dimensional Proxy}
\label{sec:restore posterior}
Recall that we start the derivation of~\eqref{eq:p(z)} from the marginal distribution of the proxy $W$. We then derive the solution for the unknown posterior. However, it is generally intractable to compute the marginal for a high-dimensional $W$ \citep{blei2017variational}. Therefore, we propose leveraging a VAE to restore the marginal distribution based on the \textit{latent-variable model}. It maximizes the marginal distribution of the observation by maximizing its evidence lower bound (ELBO)
\begin{equation}
  \label{eq:ELBO}
  \mathcal{L}_{\mathrm{VAE}}=\mathbb{E}_{z\sim q_{\phi}(z\vert w)}[\mathrm{log}p_{\theta}(w\vert z) - \mathrm{KL}(q_{\phi}(z \vert w)\vert \vert p(z))],
\end{equation}  
where $\mathrm{KL}(\cdot\vert \cdot)$ is the  Kullback-Leibler (KL) divergence \citep{kullback1951information}. Note, $p(z)$ here denotes the prior we set for $z$. Two neural networks are used to parameterize the approximated posterior $q_{\phi}(z \vert w)$ and the conditional $p_{\theta}(w\vert z)$, denoted by the encoder and the decoder, respectively.

The encoder can be used to approximate the posterior in~\eqref{eq:bayes}, and we could directly transfer a decoder from the source domain as the mechanism that generates $W$ from $Z$ is invariant across the domains given Assumption~\ref{covariate shift}. Namely, we can have the optimal decoder $p_{\theta}(w\vert z)=p^{\ast}(w\vert z)$ in the target domain and this can drive the encoder $q_{\phi}(z \vert w)$ to approximate the true $p^{\ast}(z\vert w)$.

One issue with this approach is that we do not know the prior distribution $p(z)$ in~\eqref{eq:ELBO}. Typically, the prior is chosen as $\mathcal{N}(0, \mathrm{I})$, which drives the encoder to map the high-dimensional input $w$ to a range close to $\mathcal{N}(0, \mathrm{I})$. However, this may be problematic in our setting, as the true posterior does not necessarily match with $\mathcal{N}(0, \mathrm{I})$. 

Nevertheless, $\mathcal{N}(0, \mathrm{I})$ could be an effective prior for the encoder to start with if we apply less regularization in the latent space, i.e., a smaller coefficient on the KL divergence term. This allows the encoder to learn a more flexible posterior. When combined with an optimal decoder, this approach encourages the encoder to approximate the true posterior $p^{\ast}(z\vert w)$ more effectively, rather than being heavily constrained by an overly simplistic assumption about the latent space. 

This approach leads us to leverage the objective function of $\beta$-VAE \citep{higgins2017beta}, which adds a constraint coefficient $\beta$ on the KL term in~\eqref{eq:ELBO},
\begin{equation}
  \label{eq:beta-vae}
  \mathcal{L}_{\beta\text{-VAE}}=\mathbb{E}_{z\sim q_{\phi}(z\vert w)}[\mathrm{log}p_{\theta}(w\vert z) - \beta \mathrm{KL}(q_{\phi}(z \vert w)\vert \vert p(z))].
\end{equation}

Notice that $\beta$ is set to be greater than 1 in the original $\beta$-VAE since the objective there is to learn a disentangled representation. In our approach, $\beta$ is set to be smaller than 1 as we want the encoder to learn a flexible posterior.

So far, we have made the design choice of restoring the unknown posterior. We still need to estimate the $w$-specific causal effect in~\eqref{eq:TF2}, which we show in the next section.

\begin{remark}
\label{unidentifiability}
    Restoring the unknown posterior $p^{\ast}(z\vert w)$ entirely is impossible without any inductive bias due to the unidentifiability of VAEs \citep{locatello2019challenging, rissanen2021critical}. % That is,  distinguishing two equivalent generative models from the marginal distribution of the observation is impossible without any inductive bias when the latent dimension is greater than 1.  
    However, under certain assumptions, the identifiability of the latent distribution $P(Z)$ can be achieved \citep{kivva2022identifiability}. Let $f$ be the function that generates $W$ from $Z$, the weakest assumptions that guarantee the identifiability of $P(Z)$ up to an affine transformation are (1) $P(Z)$ is a Gaussian mixture model; (2) $f$ is a piecewise affine function; (3) $f$ is weakly injective.
\end{remark}

\subsubsection{Causal Effect VAE with Transportability}

The transport formula in~\eqref{eq:TF2} suggests that both $p(y\vert z,x)$ and $p^{\ast}(z\vert w)$ are required to estimate the $w$-specific causal effect $p^{\ast}(y \vert w, do(x))$. The encoder $q_\phi(z\vert w)$ in~\eqref{eq:beta-vae} can be used to approximate $p^{\ast}(z \vert w)$, and we can use another neural network to estimate $p(y\vert z, x)$ from the prior data. Yet, in practice, we may still need to adapt the estimated $p(y\vert z,x)$ in the target domain since the distribution of $z$ is potentially different from the source domain. One way to adapt this neural network in the target domain is to jointly train it with the $\beta$-VAE by adding it as another decoder. Therefore, we obtain a new objective function,
\begin{equation}
  \label{eq:ELBO transfer}
  \mathcal{L}_{\mathrm{Trans}}=\mathbb{E}_{z\sim q_{\phi}(z\vert w)}[\mathrm{log}p_{\theta_1}(w\vert z) + \mathrm{log}p_{\theta_2}(y\vert x, z) - \beta \mathrm{KL}(q_{\phi}(z \vert w)\vert \vert p(z))].
\end{equation}
Note that the two decoders $p_{\theta_1}(w\vert z)$ and $p_{\theta_2}(y\vert x, z)$ can be estimated from the prior dataset $D$, and both of them encourage the encoder to approximate the true posterior.

A remaining issue is how we can train this VAE in the target domain, given that we have only one observation at each step. The solution is using a \textit{replay buffer} \citep{mnih2013playing} to store the observation pair ($w,x,y$) and train the VAE by sampling a batch from it for a fixed frequency. The concrete algorithm is given in Alg.~\ref{alg:general}, and the architecture is shown in Fig.~\ref{fig:arch}. We provide an implementation at \url{https://github.com/Marksmug/CEVAE-Transportability}.

Since Alg.~\ref{alg:general} uses stochastic gradient descent for optimizing~\eqref{eq:ELBO transfer}, we use the \textit{parameter counting argument} \citep{bengio2019meta} to explain the benefit of our method over naively transferring $p(y\vert w, do(x))$ from the source domain. In practice, the latter can be considered as transferring a full autoencoder in Fig.~\ref{fig:arch} from the source domain.
\begin{corollary} \quad%how to proply refer%
\label{corol1}
    Given the setting in Section~\ref{sec:2.3} and Assumption~\ref{covariate shift}. Let $\boldsymbol{V}=\{ X, Y, Z, W\}$ be the set of all variables, and $P(\boldsymbol{V})$ be the joint distribution learned from the source domain by the two decoders in Fig.~\ref{fig:arch} and $P'(\boldsymbol{V})$ be the one learned by a full autoencoder in Fig.~\ref{fig:arch}. The expected gradient w.r.t.\ the parameter $\theta_1$ of the log-likelihood under the target domain will be zero for the former one and non-zero for the latter, i.e.,
  \begin{equation}
    \mathbb{E}_{\pi^{\ast}}\left[\frac{\partial \mathrm{log}P'(\boldsymbol{V})}{\partial \theta_1}\right] \neq \mathbb{E}_{\pi^{\ast}}\left[\frac{\partial \mathrm{log}P(\boldsymbol{V})}{\partial \theta_1}\right] = 0.
  \label{eq:gradient}
  \end{equation}
\end{corollary}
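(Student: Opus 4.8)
The plan is to reduce both halves of~\eqref{eq:gradient} to a single question: evaluated at the transported (source-trained) value of $\theta_1$, is $\theta_1$ a stationary point of the expected log-likelihood in the target domain? The vanishing-gradient claim is precisely the statement that it is, and I would obtain it from the classical expected-score identity, $\mathbb{E}_{p}[\partial_\theta \log p_\theta]=0$ whenever the model $p_\theta$ coincides with the distribution $p$ against which the expectation is taken. I would therefore first write down explicitly how $\theta_1$ enters each joint, and then decide—using Assumption~\ref{covariate shift}—in which domain the factor that $\theta_1$ controls is invariant.

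For the factorized model I would use the causal factorization $P(\boldsymbol{V})=p^{\ast}(z)\,p_{\theta_1}(w\vert z)\,p_{\theta_2}(y\vert x,z)\,p(x\vert z)$, in which $\theta_1$ appears only in the factor $p_{\theta_1}(w\vert z)$, so the log-derivatives of all other factors vanish and
\begin{equation}
  \mathbb{E}_{\pi^{\ast}}\!\left[\frac{\partial \log P(\boldsymbol{V})}{\partial \theta_1}\right] = \mathbb{E}_{z\sim p^{\ast}(z)}\,\mathbb{E}_{w\sim p^{\ast}(w\vert z)}\!\left[\frac{\partial \log p_{\theta_1}(w\vert z)}{\partial \theta_1}\right].
\end{equation}
By Assumption~\ref{covariate shift} the generating mechanism of $W$ is invariant, $p^{\ast}(w\vert z)=p(w\vert z)$, so a decoder trained to optimality on the source already satisfies $p_{\theta_1}(w\vert z)=p^{\ast}(w\vert z)$. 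The inner expectation is then the expected score of a correctly specified conditional evaluated under its own law, equal to $\partial_{\theta_1}\!\int p_{\theta_1}(w\vert z)\,dw=\partial_{\theta_1}1=0$ for every $z$; averaging over $z\sim p^{\ast}(z)$ preserves zero. This establishes the right-hand equality in~\eqref{eq:gradient}.

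For the full autoencoder the same $\theta_1$ no longer isolates the invariant conditional $P(W\vert Z)$. One natural way I would make this precise is to note that the full autoencoder transports the latent prior together with the decoder and does \emph{not} re-estimate $p^{\ast}(z)$, so the $\theta_1$-dependent term is effectively the marginal $p'(w)=\int p(z)\,p_{\theta_1}(w\vert z)\,dz$ built on the frozen source prior $p(z)$. Maximizing the target-domain likelihood of $W$ then forces $p'(w)$ toward $p^{\ast}(w)=\int p^{\ast}(z)\,p(w\vert z)\,dz$, which differs from the source marginal $p(w)$ precisely because $p^{\ast}(z)\neq p(z)$ under covariate shift. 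Since the prior is held fixed, the source-optimal $\theta_1$ cannot already match the shifted marginal and is therefore not a stationary point of the target expected log-likelihood, so its expected score is nonzero, yielding the strict inequality.

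The hard part will be the nonzero half. The vanishing half is a clean application of the score identity once the factorization is fixed, but the inequality requires (i) pinning down exactly which distribution $\theta_1$ governs inside the full autoencoder and how the transported encoder/prior enters, and (ii) a genericity argument ruling out accidental cancellation—one must argue that the shift $p^{\ast}(z)\neq p(z)$ genuinely propagates to the factor controlled by $\theta_1$ and that its expected score does not integrate back to zero for non-degenerate $p(w\vert z)$. I would also state upfront the standing regularity conditions under which these manipulations are valid: smoothness of the decoders in $\theta_1$, interchange of differentiation and integration (dominated convergence), correct specification, and the premise that the transported decoders sit at their source optima so that the gradients are evaluated there.
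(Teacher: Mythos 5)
Your first half is correct and matches the paper's Lemma~\ref{lemma:1}: both reduce the gradient to the score of the invariant conditional $p_{\theta_1}(w\vert z)$ under the causal factorization and kill it with the normalization identity $\partial_{\theta_1}\!\int p_{\theta_1}(w\vert z)\,dw=0$; your version, which applies the conditional score identity pointwise in $z$, is if anything slightly cleaner than the paper's detour through the full joint.

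The nonzero half is where you have a genuine gap, and you partly flag it yourself. Two problems. First, the inference ``the source-optimal $\theta_1$ cannot already match the shifted marginal, therefore it is not a stationary point, therefore the expected score is nonzero'' is not valid: failing to be the target optimum does not preclude being a stationary point, so non-optimality alone tells you nothing about the gradient. Second, you never actually pin down how $\theta_1$ enters the full autoencoder's joint --- you gesture at a marginal built on the frozen source prior but leave item (i) of your own to-do list unresolved. The paper resolves exactly this with a concrete device: it factorizes the autoencoder's joint as $P_\phi(Z\vert W)\,P_{\theta_1}(W\vert Z)\,P_{\theta_2}(Y\vert Z,X)\,P(X\vert Z)$ and substitutes the encoder by the Bayes posterior it represents, $P_\phi(Z\vert W)=P_{\theta_1}(W\vert Z)P(Z)/P(W)$. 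The $\theta_1$-dependence then appears twice positively through $P_{\theta_1}(W\vert Z)$ (whose expected score vanishes by the first half) and once negatively through the source marginal $P(W)=\sum_z P_{\theta_1}(W\vert Z=z)P(z)$, leaving
\begin{equation*}
\mathbb{E}_{\pi^{\ast}}\left[\frac{\partial \log P'(\boldsymbol{V})}{\partial\theta_1}\right]=-\,\mathbb{E}_{\pi^{\ast}}\left[\frac{\partial \log P(W)}{\partial\theta_1}\right],
\end{equation*}
an expectation of the source score of $W$ taken under the target law $\PS_{W}\neq \P_{W}$, which therefore escapes the usual vanishing-score identity. This explicit reduction is the step you are missing; with it, your ``genericity'' worry collapses to the single claim that $\sum_{w} p^{\ast}(w)\,\partial_{\theta_1}\log p(w)\neq 0$ when $p^{\ast}\neq p$ --- a claim the paper itself also leaves at the informal level, so full rigor would still require ruling out accidental cancellation there.
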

Corolarry~\ref{corol1} (see proof in Appendix \ref{Appd:proof for P7}) formally justifies the adaptation benefit of our approach over naively transferring $p(y\vert w, do(x))$ using CEVAE-like latent variable models.  It suggests that our approach guided by the transport formula \eqref{eq:TF2} needs to adapt fewer parameters in the target domain than the naively transferring approach. This consequently affects the number of samples required for the adaptation in the target domain.
\begin{figure}[t]
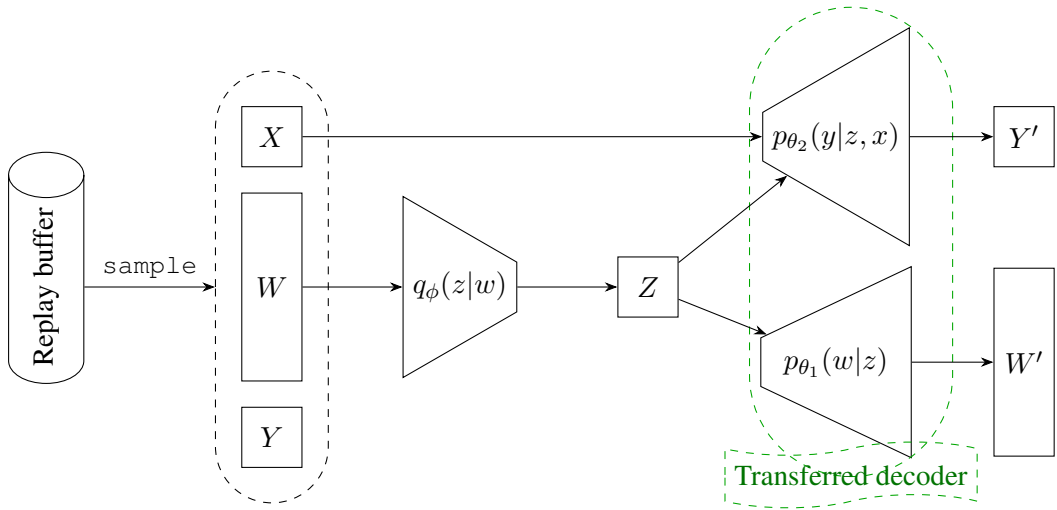

\vspace*{-1cm}
  \centering
\includeteximage{tikz/achitect.tikz}
  \caption{Architecture of VAE with transportability.}
  \label{fig:arch}
\end{figure}
\begin{remark}
    The architecture and the objective function of Alg.~\ref{alg:general} is similar to CEVAE, which estimates $p(w,z,x,y)$ from the observation of $(w,x,y)$. The objective of the two methods is the same. Both of them estimate the causal effect of the action $x$ given the observation of the proxy $w$. The differences between them are (1) CEVAE is trained in an offline setting, while Alg.~\ref{alg:general} is trained online; (2) instead of restoring $p(w,z,x,y)$, Alg.~\ref{alg:general} only restores $p(z\vert w)$, reducing the number of parameters required to adapt (as shown in Corollary~\ref{corol1}) compared to CEVAE; (3) Alg.~\ref{alg:general} does not restore $p(x\vert z)$ using another network, while CEVAE does.     
\end{remark}
\section{Evaluation}
In this section, we evaluate the proposed method on both synthetic and semi-synthetic datasets. Specifically, we first evaluate the method described in Alg.~\ref{alg:binary} on a simulated binary model. 
We then extend to high-dimensional proxy variables and demonstrate the effectiveness of our method in real-world datasets. In particular, we modify the Infant Health and Development Program (IHDP) dataset~\citep{hill2011bayesian} and test our method on it. Moreover, we want to investigate the performance of our method on an even higher dimension of the proxy. Inspired by \cite{rissanen2021critical}, we employ the image dataset MNIST~\citep{lecun1998mnist} to create a setting where the proxy is an image.
%Besides, we provide an additional synthetic experiment using a linear Gaussian model in Appendix~\ref{Appd: linear-Gaussian}.

%For the experiment of the binary model, we run multiple simulations and use the average cumulative regret in~\eqref{eq:expected regret} in the target domain as our evaluation metric. While for the one with high-dimensional proxy, we use the averaged total cumulative regret against gradient steps in the target domain as the evaluation metric.
For the binary model experiment, we run multiple simulations and evaluate the performance using the average cumulative regret in~\eqref{eq:expected regret} within the target domain. For the high-dimensional proxy case, we use averaged total cumulative regret against gradient steps as the evaluation metric.We refer to our algorithms as causal agents. There are three main questions we want to answer: (1) Does our method avoid the negative transfer in the target domain? (2) Does our method improve the sample efficiency in the target domain? (3) Can our method restore the unknown posterior $p^{\ast}(z \vert w)$ in the target domain? To answer the first one, we compare our method with baselines that transfer $\mathbb{E}[Y\vert x, w]$ from the prior dataset. For the second, we compare our method with a baseline without access to the prior dataset. For the third one, we compare the posterior $p^{\ast}(z\vert w)$ learned by our method with the true distribution of the latent context $\PS_Z$.

\subsection{Synthetic Binary Model}
\label{sec:binary model}
We begin by evaluating Alg.~\ref{alg:binary} on a binary model. We consider a bandit setting characterized by the causal graph shown in Fig.~\ref{fig:targetG}, involving proxy, context, action, and reward variables $W, Z, X, Y \in \{0,1\}$, respectively. We set the optimal action as opposite to the context. Details of the data-generating process are provided in Appendix~\ref{Appd:experiment details}. 

We provide 1000 prior samples collected from the source domain and average the cumulative regret in the target domain over 100 simulations. We compare our method with standard baselines, including Upper Confidence Bound (UCB) \citep{auer2002using} and Thompson Sampling (TS) \citep{thompson1933likelihood}.  We condition both algorithms on $w$, referring to them as Context-UCB (CUCB) and Context-TS (CTS). We refer to their counterparts that directly transfer $\mathbb{E}[Y\vert w, do(x)]$ from the prior dataset, as CUCB$^{-}$ and CTS$^{-}$, respectively.

The first row of Fig.~\ref{fig:binaryResults} shows the cumulative regret comparisons for all agents under four different settings, where the optimal policy is different across domains. We observe that both the CUCB$^{-}$ and CTS$^{-}$ agents exhibit significantly worse performance than their counterparts without access to prior data. This observation suggests that both agents suffer from negative transfer in these settings. In contrast, the causal agent, which transfers knowledge explicitly via the transport formula~\eqref{eq:TF2}, successfully avoids negative transfer. However, due to the relatively simple binary model, the causal agent does not demonstrate significantly better performance than the CTS agent, which learns directly from the target domain. The second row shows the estimation of $p^{\ast}(z)$ (blue) based on~\eqref{eq:p(z)} and the distribution of $Z$ in the source domain (solid black) as well as the target domain (dashed black). It can be seen the estimation converges to $p^{\ast}(z)$ in all settings. 

\begin{figure}
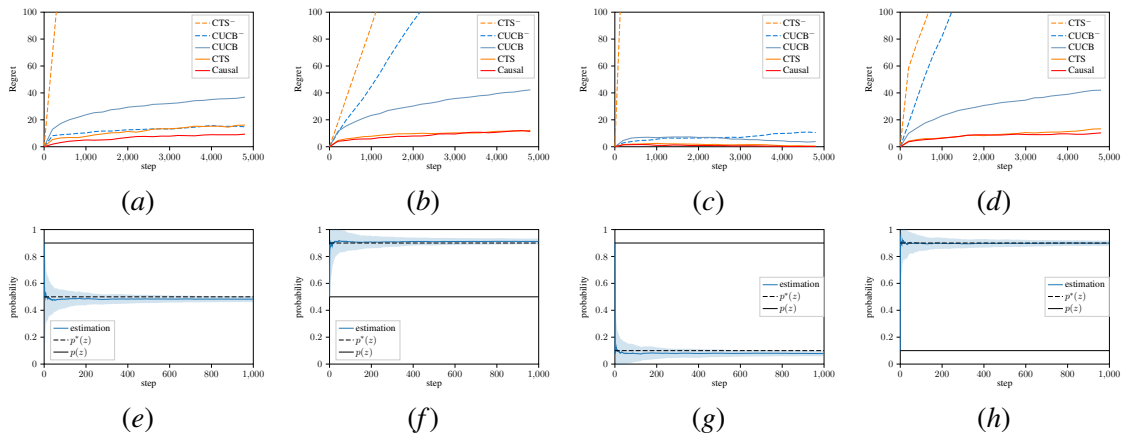

  \centering
\subfigure{
\includeteximage[width=0.23\textwidth]{tikz/binary_0.9_0.5.tex}
\label{fig:MNIST_regret1}
}
\subfigure{
\includeteximage[width=0.23\textwidth]{tikz/binary_0.5_0.9.tex}
\label{fig:MNIST_regret1}
}
\subfigure{
\includeteximage[width=0.23\textwidth]{tikz/binary_0.9_0.1.tex}
\label{fig:MNIST_regret1}
}
\subfigure{
\includeteximage[width=0.23\textwidth]{tikz/binary_0.1_0.9.tex}
\label{fig:MNIST_regret1}
}
\\
\subfigure{
\includeteximage[width=0.23\textwidth]{tikz/binary_dist_z_0.9_0.5.tex}
\label{fig:MNIST_regret1}
}
\subfigure{
\includeteximage[width=0.23\textwidth]{tikz/binary_dist_z_0.5_0.9.tex}
\label{fig:MNIST_regret1}
}
\subfigure{
\includeteximage[width=0.23\textwidth]{tikz/binary_dist_z_0.9_0.1.tex}
\label{fig:MNIST_regret1}
}
\subfigure{
\includeteximage[width=0.23\textwidth]{tikz/binary_dist_z_0.1_0.9.tex}
\label{fig:MNIST_regret1}
}

  \caption{Cumulative regret and the distribution comparison for the binary model. \capt{Each column shows the result of one setting, where context $Z$ follows different Bernoulli distributions. The top row shows the cumulative regret averaged over 100 simulations for Alg.~\ref{alg:binary} and the baselines. The bottom row shows the convergence of the estimation (blue) to the true distribution of $Z$ in the target domain (dashed black) and the distribution of $Z$ in the source domain (solid black)}.}
  \label{fig:binaryResults}
\end{figure}

\subsection{Semi-synthetic Data}
\label{sec:semi-synthetic}
In this section, we evaluate Alg.~\ref{alg:general} on two real-world datasets: IHDP and MNIST. IHDP is typically used to evaluate individual-level causal inference, and MNIST is a high-dimensional image dataset.  Following \cite{rissanen2021critical}, we modify both datasets to make the data-generating process compatible with our assumption; details can be found in Appendix~\ref{Appd:experiment details}. 

For both experiments, we use the total cumulative regret after 1000 steps as our evaluation metric, averaged over 5 random seeds. We define the following three agents:
(1) \textbf{VAE (prior)}: we pre-train a VAE on the prior data by reconstructing $W$ as well as predicting $Y$, and start the for loop in Alg.~\ref{alg:general}. This agent directly transfers the interventional distribution $p(y \vert w, do(x))$ from the source domain according to the transport formula~\eqref{eq:TF2}~\footnote{Since the encoder that learns the posterior $p(z\vert w)$ in the source domain is transferred.}; %We expect CEVAE(prior) agent to show negative transfer in the target domain as the optimal action is the opposite in the target domain;
(2) \textbf{VAE}: we initialize a random VAE and start the for loop in Alg.~\ref{alg:general} without any prior knowledge;
(3) \textbf{Causal}: we pre-train two decoders on the prior data as shown in Alg.~\ref{alg:general}, which are two neural nets with one mapping context $z$ to proxy $w$ and the other one mapping context $z$ and action $x$ to reward $y$.
For fail comparison, we use the same architecture and objective function for all agents. The details of VAEs used in the experiments can be found in Appendix~\ref{Appd:model details}. 

\subsubsection{Proxy IHDP dataset}
\begin{figure}[t]
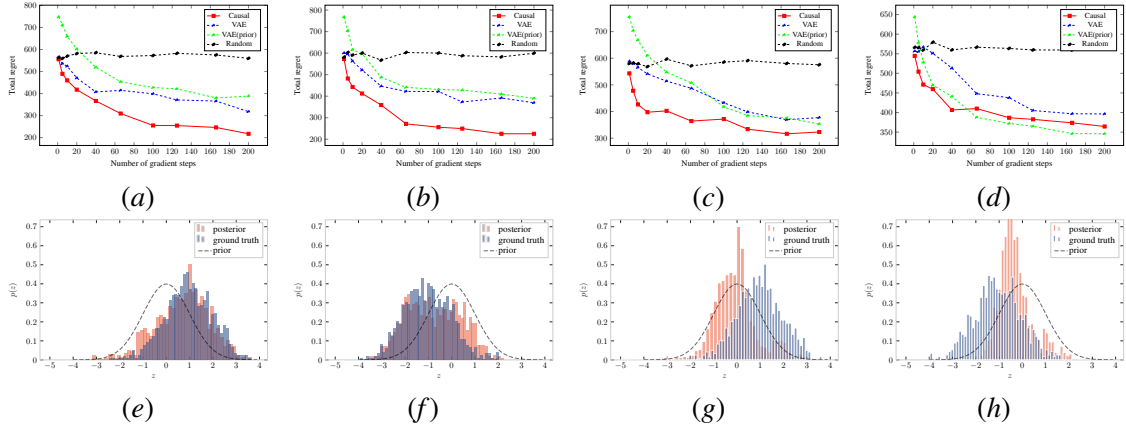

\centering
\subfigure{
\includeteximage[width=0.23\textwidth]{tikz/IHDP_-1_1.tex}
\label{fig:MNIST_regret1}
}
\subfigure{
\includeteximage[width=0.23\textwidth]{tikz/IHDP_1_-1.tex}
\label{fig:MNIST_regret2}
}
\subfigure{
\includeteximage[width=0.23\textwidth]{tikz/MNIST_-1_1.tex}
\label{fig:MNIST_regret3}
}
\subfigure{
\includeteximage[width=0.23\textwidth]{tikz/MNIST_1_-1.tex}
\label{fig:MNIST_regret4}
}
\\
\subfigure{
    \includeteximage[width=0.23\textwidth]{tikz/IHDP_dist_z_-1_1.tex}
}
\subfigure{
    \includeteximage[width=0.23\textwidth]{tikz/IHDP_dist_z_1_-1.tex}
}
\subfigure{
    \label{fig:MNIST_dist_z_-1_1}
    \includeteximage[width=0.23\textwidth]{tikz/MNIST_dist_z_-1_1.tex}
}
\subfigure{
    \label{fig:MNIST_dist_z_1_-1}
    \includeteximage[width=0.23\textwidth]{tikz/MNIST_dist_z_1_-1.tex}
}
  \caption{Total cumulative regrets and distributions comparison for IHDP and MNIST dataset. \capt{The first two columns are results for the IHDP dataset, and the last two are for the MNIST dataset. More results can be found in Appendix~\ref{Appd: more results}.}}
  \label{fig:IHDPResults}
\end{figure}
The IHDP dataset was created to study the individual causal effects of intensive child care on future cognitive test scores. It collects 25 different measurements of the children and their mothers, some continuous and some categorical, and the corresponding binary treatment they take in a randomized experiment as well as the continuous synthetically generated score. Given the limited size of the original dataset (747 samples), we train a VAE on the measurements with the latent dimension to be 1 and use it to generate more samples of the measurements. To make the data-generating process compatible with our assumption, we use the latent variable of the VAE as the latent context variable $Z$ and the generated measurement as the proxy variable $W$. Additionally, we use a simple function to regenerate the score $Y$ for each sample we generated from the VAE, such that the optimal action is 0 if $z < 0$ and 1 otherwise.

We want to compare the sample efficiency of the three agents, which can be shown by the total cumulative regret over the number of gradient steps. In this case, fewer gradient steps and smaller total cumulative regret means better sample efficiency. We provide 1000 samples from the source domain, and show our results in the first two columns of Fig.~\ref{fig:IHDPResults} with $\P_Z=\mathcal{N}(1,1)$, $\PS_Z=\mathcal{N}(-1,1)$ and $\P_Z=\mathcal{N}(-1,1)$, $\PS_Z=\mathcal{N}(1,1)$, respectively. Here, the causal agent shows a constantly higher sample efficiency than other agents. It overcomes the negative transfer that shows in the VAE (prior) agent, and it shows a better sample efficiency than the VAE agent. 

Additionally, we plot the posterior distribution (blue) learned by the encoder and the true $\PS_Z$ (orange) in the second row of Fig.~\ref{fig:IHDPResults}. Recall~\eqref{eq:ELBO transfer}, where the prior we set for the encoder to learn is $\mathcal{N}(0,1)$. However, the encoder instead learns a shift to the true $\PS_Z$. This shows that our method can approach the true $\PS_Z$ even without observing it. 

\subsubsection{Proxy MNIST Dataset}

We use the MNIST dataset to evaluate the performance of our method on image proxies. To make the data-generating process match with the causal graphs, we train a generative adversarial network (GAN)  \citep{goodfellow2014generative} on the MNIST dataset to generate images from a three-dimensional Gaussian noise. Its first dimension is then used as the context $Z$, and we generate binary $X$ and real-valued $Y$ from the context $Z$ for each sample. The function that generates $y$ from $z$ and $x$ is similar to the one used in the IHDP dataset. The optimal action for $z<0$ is 0 and 1 otherwise. 

Similar to the settings in the IHDP dataset, we divide the dataset according to the distribution of $Z$. In the first setting, $\P_Z$ is set to be $\mathcal{N}(1,1)$ and $\PS_Z$ is $\mathcal{N}(-1,1)$ and vice versa in the second setting. We train all VAEs with a three-dimensional latent and choose the first dimension as the restored $z$. The prior dataset consists of 1000 samples from the source domain. We show the results of the two settings in the last two columns in Fig.~\ref{fig:IHDPResults}. As expected, the VAE (prior) agent shows a negative transfer in both cases without updating in the target domain. As we increase the gradient steps, VAE (prior) adapts quickly in the second setting while relatively slow in the first setting. It takes 40 gradient steps to achieve a performance slightly better than the random agent. The causal agent, on the other hand, adapts quickly in both settings and constantly shows a lower regret than VAE agent, which has no prior knowledge of the source domain. 

However, the posterior distributions learned by the encoder do not consistently converge to the true $\PS_Z$. In the first setting, the posterior aligns more with $\P_Z$ instead of $\PS_Z$ (see Fig.~\ref{fig:MNIST_dist_z_-1_1}). In the second setting, the posterior shifts slightly towards $\PS_Z$ (see Fig.~\ref{fig:MNIST_dist_z_1_-1}). We hypothesize this is because of the unidentifiability of VAEs discussed in Remark~\ref{unidentifiability}. Although the transferred decoder is assumed to serve as an inductive bias in the target domain, it is insufficient to restore the latent variable distribution with the dimension of three in this experiment.

\section{Conclusion}
In this paper, we addressed the challenge of negative transfer in a latent contextual bandit problem with a high-dimensional proxy, leveraging causal transportability theory. We begin by introducing a novel algorithm for the binary case, showing how to effectively transfer knowledge from a source to a target domain via the transport formula. We then extend this to handle high-dimensional proxies using VAEs.  Our experiments on both synthetic and semi-synthetic datasets demonstrate that our method successfully avoids negative transfer and improves sample efficiency under the covariate shift on the latent context. These findings highlight the effectiveness of using causal transportability to tackle distribution shifts between environments and enhance the adaptability of bandit algorithms. Extending this framework to reinforcement learning, where a Markov decision process is considered to govern the data-generating process, could be an intriguing avenue for future work. 
% Future work could explore the extension of these methods to other reinforcement learning settings or investigate their applicability in more complex real-world scenarios.
% Acknowledgments---Will not appear in anonymized version
\acks{This work was supported by the Research Council of Finland Flagship programme: Finnish Center for Artificial Intelligence FCAI.}

\bibliography{reference.bib}
\newpage
\appendix
\section{Related Work}
\label{Appd:related work}
In this paper, we propose to use the transport formula to address the negative transfer in latent contextual bandits under the assumption of distribution shift on the latent context. Also, we assume a high-dimensional proxy variable and employ VAEs to restore the latent context from it. In this section, we relate our contributions to the existing literature.

\noindent \textbf{Transfer Learning in Bandits}. Our work is closely aligned with the framework of transfer learning in bandits, which aims to accelerate the convergence to the optimal actions in the target domain by transferring knowledge from an offline dataset collected from related environments. Existing works vary in their assumptions regarding the ``differences" between environments. For example, \cite{deshmukh2017multi} assume the reward distribution for each arm given the context is different and improves the estimation of the reward distributions for each arm by leveraging the similarities in the context-to-reward mapping. \cite{liu2018transferable} address potentially different context and action spaces and introduce a method to learn a translation matrix for aligning feature spaces between source and target domains. While they assume a single unknown reward parameter shared between domains, \cite{labille2021transferable} relax this assumption by considering different unknown reward parameters across domains. Combining with LinUCB, they propose leveraging historical data from similar domains to initialize new arm parameters, using a similarity measure and a decaying weight to prioritize relevant observations.  \cite{zhang2017transfer} address the presence of unobserved confounders that simultaneously influence both the rewards and actions. Under the assumption of shifts in reward distributions, they propose to derive causal bounds from prior datasets, utilizing these bounds to inform and optimize exploration and exploitation within the target domain. Although these works explore various assumptions regarding domain differences, none of them considers changes in the distribution of the context. The most closely related work to ours is by \cite{cai2024transfer}.
\cite{cai2024transfer} explore transfer learning for contextual bandits under covariate shift, \ie the distribution of context is different across domains. However, their approach involves partitioning the covariate space into bins, which may not be feasible for high-dimensional covariates such as the proxy in our problem setting. Moreover, their setting does not incorporate an underlying latent context, as is the case in our latent contextual bandit framework.

\noindent \textbf{Latent Contextual Bandits}. Latent contextual bandits were initially introduced by \cite{zhou2016latent} and later extended by \cite{hong2020latent}, who employ offline-learned models that provide an initial understanding of the latent states. However, their problem setting differs from ours in two key aspects: (1) their reward function depends on the context, action, and latent state, whereas our reward depends only on the action and latent state; and (2) they do not account for distributional shifts in the context or latent state. More closely related to our setting, \cite{sen2017contextual} consider a latent confounder associated with the observed context, and the reward depends solely on the latent confounder and action. Unlike our approach, they address the problem in an online setting by factorizing the observed context-reward matrix into two low-dimensional matrices, with one capturing the relationship between contexts and latent confounders and the other between rewards and latent confounders. Although this work defines their problem setting from a causal perspective, they do not apply any theory from causal inference to derive their method. %adding the explanation. 

\noindent \textbf{Bandits and Causal Inference}. The connection between bandit problems and causal inference is well-established. \cite{bareinboim2015bandits} introduce the problem of bandits with unobserved confounders and leverage counterfactual estimation from causal inference to reduce the cumulative regret. Building on this, \cite{forney2017counterfactual} further exploits counterfactual reasoning to generate counterfactual data, integrating it with the observational data and interventional data to learn a good policy. Moreover, there is extensive literature using causal graphs to define the structure of bandits under the name of causal bandits \citep{lattimore2016causal, lee2018structural,lee2019structural, lu2021causal, bilodeau2022adaptively, yan2024causal}. While those works exploit knowledge in a single environment, \cite{bellot2024transportability} propose a general transfer learning framework for causal bandits, using posterior approximations on the causal relations encoded in the selection diagram, grounded in transportability theory. We realize their approach of learning the invariant causal relations aligns with our method, and we position our work as an extension of their framework within the specific context of latent contextual bandits. Further, we relax their assumption of discrete SCMs by considering a high-dimensional proxy variable and a continuous latent context.
\section{Proofs for Section \ref{sec:challenge on unobserved context} }
\label{appd:props}
\begin{proposition*}\ref{prop1} \quad%how to proply refer%
  For an offline contextual bandit under covariate shift, the causal effect $p(y\vert z, do(x))$, where $z\in \mathcal{Z}$, $x \in \mathcal{X}$ and $Y \in \mathbb{R}$ is directly transportable. Additionally, the transport formula of $p^{\ast}(y\vert z, do(x))$ can be written as 
  \begin{equation*}
    p^{\ast}(y\vert z, do(x))=p(y\vert z, x).
  \end{equation*}
\end{proposition*}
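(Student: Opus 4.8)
The plan is to establish the identity in two moves: first convert the intervention $do(x)$ into ordinary conditioning within the target domain, and then transport the resulting observational conditional from the source domain. Throughout I would work with the structure read off from the selection diagram in Fig.~\ref{fig:SD_}, namely the edges $Z \to X$, $Z \to Y$, $X \to Y$, together with the selection node $S \to Z$ that marks the single point of discrepancy under covariate shift.

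First I would remove the $do$-operator. Since the arm $X$ is chosen from $Z$, the variable $Z$ closes the only back-door path $X \leftarrow Z \to Y$ between $X$ and $Y$. Formally, deleting the edges leaving $X$ yields the graph $\mathcal{G}^{\ast}_{\underline{X}}$ in which the sole $X$--$Y$ path is $X \leftarrow Z \to Y$, and this path is blocked by conditioning on $Z$; hence $(Y \perp X \mid Z)_{\mathcal{G}^{\ast}_{\underline{X}}}$. Rule~2 of the $do$-calculus then licenses the exchange of $do(x)$ for $x$, giving $p^{\ast}(y \mid z, do(x)) = p^{\ast}(y \mid z, x)$ in the target domain (and, by the identical argument on $\mathcal{G}$, $p(y \mid z, do(x)) = p(y \mid z, x)$ in the source domain).

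Second I would transport the conditional $p^{\ast}(y \mid z, x)$ across domains. In the selection diagram every path from $S$ to $Y$ passes through $Z$ --- either $S \to Z \to Y$ or $S \to Z \to X \to Y$ --- so conditioning on the set $\{Z, X\}$ blocks all of them and yields $(Y \perp S \mid Z, X)_{\mathcal{D}}$. This separation certifies that the conditional is invariant across the two domains, i.e.\ $p^{\ast}(y \mid z, x) = p(y \mid z, x)$; equivalently, this is exactly the invariance $P^{\ast}_{Y \mid Z, X} = P_{Y \mid Z, X}$ postulated by the covariate-shift Assumption~\ref{covariate shift}. Chaining the two steps gives $p^{\ast}(y \mid z, do(x)) = p^{\ast}(y \mid z, x) = p(y \mid z, x)$, which is the claimed transport formula; and since the source effect also reduces to $p(y \mid z, x)$, we have $p^{\ast}(y \mid z, do(x)) = p(y \mid z, do(x))$, establishing \emph{direct} transportability.

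I expect the individual steps to be routine; the only subtlety is that conditioning on $Z$ must simultaneously discharge two jobs --- blocking the back-door path for the $do$-removal and $d$-separating the selection node $S$ from $Y$ for the cross-domain transport. The main obstacle is thus purely one of bookkeeping: checking that adding $X$ to the conditioning set in the second step does not inadvertently open a collider path --- it does not, since $X$ carries only the single incoming edge $Z \to X$ --- so that the separation $(Y \perp S \mid Z, X)$ genuinely holds and the two invariance arguments remain compatible.
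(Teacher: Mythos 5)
Your proof is correct and follows essentially the same route as the paper's: a two-step do-calculus argument on the selection diagram of Fig.~\ref{fig:SD_}, using $(Y \indep X \mid Z)$ in the edge-deleted graph to discharge the $do$-operator and a $d$-separation of $S$ from $Y$ to transport across domains. The only difference is cosmetic --- you remove the intervention first and transport second, whereas the paper applies Rule~1 to drop the selection node before applying Rule~2 --- and both orderings are valid.
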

\begin{proof}
  Given the selection diagram $\mathcal{D}$ in Fig.~\ref{fig:SD_}, the proof is two-fold. First, we prove the causal effect $p^{\ast}(y\vert z, do(x))$ is directly transportable from the source domain $\pi$ to the target domain $\pi^{\ast}$. Second, we prove the transport formula can be written as $p^{\ast}(y\vert z, do(x))=p(y\vert z, x)$.

  We can rewrite $p^{\ast}(y\vert z,do(x))=p(y\vert z, s, do(x))$ by conditioning on the selection node $S$ to denote the distribution in the target domain \citep{pearl2011transportability}. Since we can observe $Z$, we can use \emph{d-separation} \citep{pearl2009causality} to prove $(Y \indep S \vert Z)_{\mathcal{D}_{\overline{X}}}$. This allows us to use the first rule of the do-calculus \cite{pearl2009causality}. Thus we have the transport formula $p^{\ast}(y\vert z,do(x))=p{(y\vert z,do(x))}$. Therefore, we prove the causal effect $p^{\ast}(y\vert z, do(x))$ is directly transportable from $\pi$ to $\pi^{\ast}$. 

  Following the result above, we can further apply the second rule of the do-calculus and obtain the transport formula $p^{\ast}(y\vert z,s,do(x))=p{(y\vert z, x)}$ based on the fact $(Y \indep X \vert Z)_{\mathcal{D}_{\underline{X}}}$. Hence,~\eqref{eq:TF1} holds. 

\end{proof}

\begin{proposition*}\ref{prop2}\quad
  For the problem described in Section\ref{sec:2.3}, the $w$-specific causal effect $p^{\ast}(y\vert w,do(x))$ is not directly transportable and the transport formula of $p^{\ast}(y\vert w,do(x))$ can be written as
  \begin{equation*}
    p^{\ast}(y\vert w,do(x))=\sum_{z\in\mathcal{Z}}p(y\vert w,z)p^{\ast}(z\vert w).
  \end{equation*} 
\end{proposition*}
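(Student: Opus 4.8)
The plan is to mirror the two-fold structure of the proof of Proposition~\ref{prop1}, but now exploiting that $Z$ is latent, so that we can no longer $d$-separate $Y$ from the selection node $S$ simply by conditioning on the observed proxy. I would work throughout in the selection diagram $\mathcal{D}$ of Fig.~\ref{fig:SD}, whose relevant structure is $S \to Z$, $Z \to W$, $Z \to X$, $Z \to Y$ and $X \to Y$, and I would denote the target-domain quantity by conditioning on the selection node, i.e.\ $p^{\ast}(y\vert w, do(x)) = p(y \vert w, s, do(x))$. First, to establish that the effect is \emph{not} directly transportable, I would exhibit an active path between $Y$ and $S$ given $W$ in the mutilated graph $\mathcal{D}_{\overline{X}}$: the chain $S \to Z \to Y$ cannot be blocked by conditioning on $W$, since $W$ is a child of the chain's middle node $Z$ rather than $Z$ itself. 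Hence $(Y \not\indep S \vert W)_{\mathcal{D}_{\overline{X}}}$, the first rule of the do-calculus cannot be used to drop $S$, and direct transportability fails. This is precisely the obstruction that was absent in Proposition~\ref{prop1}, where $Z$ was observable.

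To derive the transport formula itself I would condition on the latent $Z$ and marginalize,
\[
p(y\vert w, s, do(x)) = \sum_{z\in\mathcal{Z}} p(y \vert w, z, s, do(x))\, p(z \vert w, s, do(x)),
\]
and then simplify the two factors separately. For the reward factor, in $\mathcal{D}_{\overline{X}}$ the only paths linking $Y$ to $W$ and to $S$ pass through $Z$ (namely $Y \leftarrow Z \to W$ and $Y \leftarrow Z \leftarrow S$), so conditioning on $Z$ blocks them, giving $(Y \indep W, S \vert Z)_{\mathcal{D}_{\overline{X}}}$ and hence $p(y\vert w,z,s,do(x)) = p(y \vert z, do(x))$; a second application of the do-calculus using $(Y \indep X \vert Z)_{\mathcal{D}_{\underline{X}}}$—where severing $X \to Y$ leaves only the blocked fork $X \leftarrow Z \to Y$—then yields the domain-invariant $p(y \vert z, x)$, which is estimable from $D$. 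For the posterior factor I would argue that $do(x)$ is irrelevant to $Z \vert W, S$: in $\mathcal{D}_{\overline{X}}$ the only path from $X$ to $Z$ is the collider $X \to Y \leftarrow Z$, which is blocked because neither $Y$ nor any descendant of $Y$ lies in the conditioning set, so by the third rule of the do-calculus $p(z\vert w,s,do(x)) = p(z \vert w, s) = p^{\ast}(z\vert w)$. Substituting both factors back produces the claimed expression, which coincides with~\eqref{eq:TF2}.

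The main obstacle I anticipate is the careful bookkeeping of the selection node $S$ together with the intervention $do(x)$: I must keep $S$ attached to the $Z$-posterior factor, so that it remains the genuinely target-domain quantity $p^{\ast}(z\vert w)$ that covariate shift forces to differ from the source posterior $p(z\vert w)$, while simultaneously stripping $S$ from the reward factor so that it becomes the transferable $p(y\vert z,x)$. Verifying the relevant $d$-separations in the correctly mutilated graphs $\mathcal{D}_{\overline{X}}$ and $\mathcal{D}_{\underline{X}}$—and in particular confirming that the collider at $Y$ genuinely blocks the $X$–$Z$ path in the posterior factor—is where the argument must be watertight. This same bookkeeping is what explains why naive transfer of $p(y\vert w, do(x))$ fails: it would silently replace $p^{\ast}(z\vert w)$ by the source posterior $p(z\vert w)$, inducing the negative transfer observed empirically.
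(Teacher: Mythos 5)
Your proof is correct and follows essentially the same route as the paper's: condition on the selection node, refute direct transportability by exhibiting the unblocked chain $S \to Z \to Y$ given only $W$ in $\mathcal{D}_{\overline{X}}$, then marginalize over $Z$ and apply do-calculus rules 1, 2 and 3 to the reward and posterior factors separately. If anything, your bookkeeping is slightly tighter than the paper's, which invokes $(S \indep Y \vert Z)_{\mathcal{D}_{\overline{X}}}$ a second time where the relevant independence for dropping $w$ from the reward factor is $(Y \indep W \vert Z)_{\mathcal{D}_{\overline{X}}}$, the statement you use directly.
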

\begin{proof}
  Given the selection diagram $\mathcal{D}$ in Fig.~\ref{fig:SD} that encodes the domain discrepancy in the problem outlined in Section~\ref{sec:2.3}, we can rewrite $p^{\ast}(y\vert w,do(x))=p(y\vert w,s,do(x))$. By d-separation, we can write $p(y\vert w,s,do(x))=p(y\vert w,do(x))$ if we have the condition $(Y \indep S \vert W)_{\mathcal{D}_{\overline{X}}}$. However, this condition is not satisfied as conditioning on the proxy $W$ still leaves the backdoor path $S\rightarrow Z\rightarrow Y$ open. Hence, we prove $p^{\ast}(y\vert w,do(x))\neq p(y\vert w,do(x))$ and the $w$-specific causal effect is not directly transportable.

  Next, we prove $p^{\ast}(y\vert w,do(x))=\sum_{z\in\mathcal{Z}}p(y\vert w,z)p^{\ast}(z\vert w)$. We can rewrite $p^{\ast}(y \vert w, do(x)) = p(y \vert  w, s, do(x)) $. By basic probability operations, we have
  \begin{align*}
    p(y \vert w, s, do(x)) = \sum_{z \in \mathcal{Z}}p(y \vert w, s, z, do(x))p(z \vert  w, s, do(x)).  
  \end{align*}
   Since $(S \indep Y \vert Z)_{\mathcal{D}_{\overline{X}}}$ and $(X \indep Z)_{\mathcal{D}_{\overline{X}}}$, by the first and third rule of the do-calculus, we have $p(y \vert w, s, do(x)) = \sum_{z \in \mathcal{Z}}p(y \vert w, z, do(x))p(z \vert w, s)$. Now, we can also observe that $(S \indep Y \vert Z)_{\mathcal{D}_{\overline{X}}}$. By the first rule of the do-calculus, we have $P(Y \vert w, s, do(x)) = \sum_{z \in \mathcal{Z}}p(y \vert z, do(x))p(z \vert w, s)$. Further, we have $(X \indep Y \vert Z)_{\mathcal{D}_{\underline{X}}}$, thus allowing us to apply the second rule of the do-calculus on $p(y\vert z,do(x))$ and obtain $p(y\vert z,do(x))=p(y\vert z,x)$. Finally, based on the definition of the selection node, we then have the transport formula $p^{\ast}(y\vert w,do(x))=\sum_{z\in\mathcal{Z}}p(y\vert w,z)p^{\ast}(z\vert w)$.
\end{proof}

\section{Experiment for Negative transfer}
\label{Appd: negative transfer}
We test on both a binary model and a linear Gaussian model to illustrate the negative transfer. Consider a binary model where $W$, $Z$, $X$, and $Y$ are binary variables. Let $\P_{Z} = \mathrm{Bern}(0.9)$ and $\PS_{Z} = \mathrm{Bern}(0.5)$. We set the optimal action $X$ opposite to $Z$. A Thompson sampling \citep{thompson1933likelihood} agent is trained to learn the $w$-specific causal effect $p(y \vert w, do(x))$ by identifying the optimal action conditioned on each $w \in \{0,1\}$; we refer to this agent as the Context-Thompson Sampling (CTS) agent. The learned $p(y \vert w, do(x))$ is then directly transferred to the target domain as if it were $p^{\ast}(y \vert w, do(x))$. We refer to this agent as CTS$^{-}$, and to the one that does not transfer knowledge from the source domain as CTS. The cumulative regrets are presented in Fig.~\ref{fig:negative cucb}. It can be observed that the CTS$^{-}$ agent exhibits linearly increasing cumulative regret, while the cumulative regret of the CTS agent trained directly in the target domain increases sub-linearly. 

Further, considering the high-dimensional nature of the proxy $W$, we conducted a simulation where $Z \in \mathbb{R}$, $W \in \mathbb{R}^{5}$ and $X,Y \in \{0,1\}$, using LinUCB \citep{li2010contextual} as the baseline agent. The distribution of $Z$ is $\mathcal{N}(8,1)$ and $\mathcal{N}(2,1)$ in the source domain and the target domain, respectively. The generating process from $Z$ to $W$ follows a linear-Gaussian transformation: $ w = a \odot z + b + \epsilon$, where $\epsilon \sim \mathcal{N}(0,1)$ and $\odot$ is the element-wise multiplication. We set the optimal action to be 1 if $z<5$, otherwise 0, \ie $y = \mathbbm{1}\{( z\geq 5 \wedge x = 0) \lor (z < 5 \wedge x = 1)\}$.  The result, shown in Fig.~\ref{fig:negative linucb}, demonstrates a similar trend of negative transfer.% Details of the data-generating mechanisms can be found in Appendix~\ref{Appd:experiment details}.

\begin{remark}
\label{connection to MAB with confounders}
 Interestingly, we notice the connection between the binary model and a transfer learning problem of \textit{bandits with unobserved confounders} proposed by \cite{bareinboim2015bandits}. In their setting, a muti-armed bandit (MAB) agent receives a predilection towards arm selection that is influenced by a hidden confounder, which also affects the reward. The agent then makes decisions based on the predilection. In their work, the counterfactual question ``What would have happened had the agent decided to play differently'' is asked. 
  By considering the transfer learning setting as defined in Section~\ref{sec:2.3} and interpreting the predilection towards the binary action as a proxy for an underlying binary confounder, the problem described in Section~\ref{sec:2.3} is established. 
\end{remark}

\section{Derivation of Eq.~\eqref{eq:p(z)}}
\label{Appd: Derivation for 6}
We start from the marginal distribution of $w=1$,
\begin{align*}
  p^{\ast}(w=1)&=\sum_{z\in \{0,1\}}p^{\ast}(w=1\vert z)p^{\ast}(z)\\
  &=p^{\ast}(w=1\vert z=0)p^{\ast}(z=0) + p^{\ast}(w=1\vert z=1)\PS(z=1)\\
  &=p^{\ast}(w=1\vert z=0)(1-p^{\ast}(z=1)) +p^{\ast}(w=1\vert z=1)p^{\ast}(z=1).
\end{align*}
Rearranging the equation, we obtain an expression for $p^{\ast}(z=1)$ as 
\begin{equation*}
  p^{\ast}(z=1) = \frac{p^{\ast}(w=1)-p^{\ast}(w=1\vert z=0)}{p^{\ast}(w=1 \vert z=1)-p^{\ast}(w=1\vert z=0)}.
\end{equation*} 
Note, in the assumption of covariate shift, we have $\PS_{W\vert Z}=\P_{W\vert Z}$, therefore, we have 
\begin{equation*}
  p^{\ast}(z=1) = \frac{p^{\ast}(w=1)-p(w=1\vert z=0)}{p(w=1 \vert z=1)-p(w=1\vert z=0)}.
\end{equation*} 
\section{Proof for Corollary.~\ref{corol1}}
In this section, we provide the proof for Corollary~\ref{corol1}. We first state the corollary again for convenience.
\begin{corollary*} \ref{corol1}\quad%how to proply refer%
    Given the setting in Section.~\ref{sec:2.3} and Assumption.~\ref{covariate shift}, let $\boldsymbol{V}=\{ X, Y, Z, W\}$ be the set of all variables, $\theta_1$ be the parameter that parameterize the mechanism generating $W$ from $Z$ and $P(\boldsymbol{V})$ be the joint distribution learned from the source domain $\pi$ by the two decoders in Fig.~\ref{fig:arch} and $P'(\boldsymbol{V})$ be the one learned by both encoder and decoders in Fig.~\ref{fig:arch}, the expected gradient w.r.t the parameter $\theta_1$ of the log-likelihood under the target domain $\pi^{\ast}$ will be zero for the former one and non-zero for the latter 
  \begin{equation}
    \label{eq:gradient_copy}
    \mathbb{E}_{\pi^{\ast}}\left[\frac{\partial \mathrm{log}P'(\boldsymbol{V})}{\partial \theta_1}\right] \neq \mathbb{E}_{\pi^{\ast}}\left[\frac{\partial \mathrm{log}P(\boldsymbol{V})}{\partial \theta_1}\right] = 0.
  \end{equation}
\end{corollary*}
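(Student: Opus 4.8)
The plan is to first observe that $\theta_1$ enters both $\log P(\boldsymbol{V})$ and $\log P'(\boldsymbol{V})$ only through the single factor $p_{\theta_1}(w\vert z)$ modelling the mechanism $Z\to W$; every other factor ($p_{\theta_2}(y\vert x,z)$, the treatment assignment, and the latent marginal) is $\theta_1$-free. Both expected gradients in~\eqref{eq:gradient_copy} therefore reduce to the expected conditional score
\begin{equation*}
\mathbb{E}_{\pi^{\ast}}\!\left[\frac{\partial \log p_{\theta_1}(w\vert z)}{\partial \theta_1}\right],
\end{equation*}
and the whole corollary hinges on the fact that the two architectures feed this score \emph{different} joint laws over $(W,Z)$. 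Establishing this reduction cleanly is my first step.

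For the causal model the two decoders are fitted on the source dataset $D$ in which $Z$ is \emph{observed}, so at the source optimum $p_{\theta_1}(w\vert z)=p(w\vert z)$, and by the covariate-shift Assumption~\ref{covariate shift} this conditional is domain-invariant, $p(w\vert z)=p^{\ast}(w\vert z)$. Writing the target expectation as $\mathbb{E}_{z\sim p^{\ast}(z)}\bigl[\mathbb{E}_{w\sim p^{\ast}(w\vert z)}[\partial_{\theta_1}\log p_{\theta_1}(w\vert z)]\bigr]$, the inner expectation equals $\partial_{\theta_1}\!\int p_{\theta_1}(w\vert z)\,dw=\partial_{\theta_1}1=0$ by the standard score identity and vanishes \emph{pointwise in $z$}. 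Because the conditional score is zero for every $z$, integrating it against \emph{any} marginal---in particular the shifted $p^{\ast}(z)\neq p(z)$---keeps the result $0$; this robustness to the covariate shift is precisely the right-hand equality of~\eqref{eq:gradient_copy}.

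For the full autoencoder the latent is not the true context but the transferred encoder code $z\sim q_{\phi}(z\vert w)$. Here source ELBO-training only enforces the \emph{aggregate} stationarity $\mathbb{E}_{w\sim p(w)}\mathbb{E}_{z\sim q_{\phi}(z\vert w)}[\partial_{\theta_1}\log p_{\theta_1}(w\vert z)]=0$, a single average over the source proxy marginal $p(w)$ rather than a pointwise-in-$z$ identity. Passing to the target replaces $p(w)$ by $p^{\ast}(w)$, and since the covariate shift on $Z$ propagates through the invariant $p(w\vert z)$ to give $p^{\ast}(w)\neq p(w)$, the reweighting generically breaks the cancellation, leaving a non-zero expected score and hence the strict inequality in~\eqref{eq:gradient_copy}.

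The step I expect to be the main obstacle is turning ``the cancellation breaks'' into a genuine non-vanishing statement, since the reweighted average could accidentally stay zero for degenerate encoders or finely tuned marginals. I would therefore state the autoencoder conclusion as holding \emph{generically}, under a mild non-degeneracy condition guaranteeing the conditional score is not orthogonal to the marginal perturbation $p^{\ast}(w)-p(w)$, and I would make explicit the standing realizability and regularity assumptions---that the source optimum is attained and that differentiation under the integral sign is permitted---on which the clean score-identity steps rely.
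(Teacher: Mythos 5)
Your argument for the equality half, $\mathbb{E}_{\pi^{\ast}}[\partial\log P(\boldsymbol{V})/\partial\theta_1]=0$, is essentially the paper's Lemma~\ref{lemma:1}: only the factor $p_{\theta_1}(w\vert z)$ carries $\theta_1$ in the two-decoder factorization, the source-optimal decoder equals the domain-invariant conditional $p^{\ast}(w\vert z)$ by Assumption~\ref{covariate shift}, and the conditional score identity makes the expectation vanish pointwise in $z$ and hence under the shifted marginal $p^{\ast}(z)$. That part is correct and matches the paper (the paper reassembles the full joint before applying the score identity, but the content is the same).

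The inequality half has a genuine gap. Your opening reduction --- that $\theta_1$ enters $\log P'(\boldsymbol{V})$ \emph{only} through the factor $p_{\theta_1}(w\vert z)$ --- is precisely what the paper's Lemma~\ref{lemma:2} denies, and the denial is what makes the corollary true. There, the encoder factor of the full autoencoder is identified via Bayes' rule with $P_{\theta_1}(W\vert Z)P(Z)/P(W)$, so $\log P'(\boldsymbol{V})$ contains $2\log P_{\theta_1}(W\vert Z)-\log P(W)$; the first piece has zero expected gradient by Lemma~\ref{lemma:1}, and the surviving term $-\mathbb{E}_{\pi^{\ast}}\left[\partial\log P(W)/\partial\theta_1\right]$ is nonzero because the marginal $P(W)=\sum_{z}P_{\theta_1}(W\vert z)P(z)$ genuinely depends on $\theta_1$. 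If your reduction were correct, then, taking $\mathbb{E}_{\pi^{\ast}}$ over the same target joint on both sides as the statement of \eqref{eq:gradient_copy} does, the two expected gradients would be the identical functional of the identical law and the strict inequality could not hold. You avoid this only by silently changing the measure in the autoencoder case from the true target joint $p^{\ast}(z)p^{\ast}(w\vert z)$ to the encoder-induced law $p^{\ast}(w)q_{\phi}(z\vert w)$ and arguing that the source-level aggregate stationarity breaks under reweighting. That is a different mechanism from the paper's (it locates the failure in the sampling law rather than in the model's $\theta_1$-dependence), it is inconsistent with your own first-paragraph reduction, and, as you concede, it delivers only a ``generic'' non-vanishing statement rather than the unconditional claim. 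To close the gap along the paper's lines, substitute the Bayes-optimal encoder into $P'(\boldsymbol{V})$ and track the $\theta_1$-dependence it introduces through the normalizer $P(W)$.
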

\begin{proof}
    We prove the corollary in two steps. In the first step, we prove the equality in \eqref{eq:gradient_copy}. Then, we prove the inequality in \eqref{eq:gradient_copy}. We, therefore, derive two lemmas for the two steps, respectively.
\begin{lemma}
\label{lemma:1}
     Given the setting in Section~\ref{sec:2.3} and Assumption~\ref{covariate shift}, let $\boldsymbol{V}=\{ X, Y, Z, W\}$ be the set of all variables, $\theta_1$ and $\theta_2$ be the parameters that parameterize the mechanism generating $W$ from $Z$ and $Y$ from $Z$ and $X$. Furthermore, let $P(\boldsymbol{V})$ be the joint distribution learned from the source domain $\pi$ by the two decoders in Fig.~\ref{fig:arch}. The expected gradient w.r.t.\ the parameter $\theta_1$ of the log-likelihood under the target domain $\pi^{\ast}$ will be zero,
  \begin{equation}
    \mathbb{E}_{\pi^{\ast}}\left[\frac{\partial \mathrm{log}P(\boldsymbol{V})}{\partial \theta_1}\right] = 0.
  \end{equation}
\end{lemma}
\begin{proof}
Notice when we use two decoders to approximate $P_{\theta_1}(W\vert Z)$ and $P_{\theta_2}(Y\vert Z,X)$, the joint $P(\boldsymbol{V})$ can be factorized as 
    \begin{align}
    \label{eq:fac1}
    P(\boldsymbol{V}) = P(Z) P_{\theta_1}(W\vert Z) P_{\theta_2}(Y\vert Z,X) P(X\vert Z),
    \end{align}
    according to the casual graph in Fig.~\ref{fig:sourceG}.
    %Here, the last term is $P(X)$ instead of $P(X\vert Z)$ following Fig.~\ref{fig:sourceG}. The reason behind this factorization is that there is no model to learn the mechanism that generates $X$ from $Z$ in our approach, but only the decoder that learns the joint distribution $P(X, Y, Z)$, which can be factorized to $P(Y\vert Z,X) P(X) P(Z)$. 
With this factorization, the expected gradient can be simplified as
    \begin{align*}
    \mathbb{E}_{\pi^{\ast}} \left[ \frac{\partial \log P(\boldsymbol{V})}{d\theta_1} \right] 
    &= \mathbb{E}_{\boldsymbol{V} \sim P^*} \left[ \frac{\partial \log P(\boldsymbol{V})}{\partial\theta_1} \right] \\
    &= \mathbb{E}_{\boldsymbol{V} \sim P^*} \left[ \frac{\partial}{\partial\theta_1} \log P(Z) P_{\theta_1}(W\vert Z) P_{\theta_2}(Y\vert Z,X) P(X\vert Z) \right] \tag*{by~\eqref{eq:fac1}}\\
    &= \mathbb{E}_{\boldsymbol{V} \sim P^*} \left[ \underbrace{\frac{\partial}{\partial\theta_1} \log P(Z)}_{=0} + \frac{\partial}{\partial\theta_1} \log P_{\theta_1}(W\vert Z)\right.\\ 
    &\qquad\qquad+\left. \underbrace{\frac{\partial}{\partial\theta_1} \log P_{\theta_2}(Y\vert Z,X)}_{=0} + \underbrace{\frac{\partial}{\partial\theta_1} \log P(X\vert Z)}_{=0} \right] \\
    &= \mathbb{E}_{\boldsymbol{V} \sim P^*} \left[ \frac{\partial}{\partial\theta_1} \log_{\theta_1} P(W\vert Z) \right] \\
    &= \mathbb{E}_{\boldsymbol{V} \sim P^*} \left[ \frac{\partial}{\partial\theta_1} \log P^*_{\theta_1}(W\vert Z) \right] \quad \tag*{by Assumption 1} \\
    &= \mathbb{E}_{\boldsymbol{V} \sim P^*} \left[ \underbrace{\frac{\partial}{\partial\theta_1} \log P^*(Z)}_{=0} + \frac{\partial}{\partial\theta} \log P^*_{\theta_1}(W\vert Z) \right.\tag*{by~\eqref{eq:fac1}}\\
    &\qquad\qquad\left.+ \underbrace{\frac{\partial}{\partial\theta} \log P^*_{\theta_2}(Y\vert Z,X)}_{=0} + \underbrace{\frac{\partial}{\partial\theta} \log P^*(X\vert Z)}_{=0} \right]\\
    &= \mathbb{E}_{\boldsymbol{V} \sim P^*} \left[ \frac{\partial \log P^*(\boldsymbol{V})}{\partial\theta_1} \right] \\
    &= \sum_{\boldsymbol{v}}p^{*}(\boldsymbol{v})\frac{\partial \log p^*(\boldsymbol{v})}{\partial\theta_1}\tag*{expanding the expectation}\\
    &= \sum_{\boldsymbol{v}}p^{*}(\boldsymbol{v})\frac{\partial \log p^*(\boldsymbol{v})}{\partial p^*(\boldsymbol{v})}\frac{\partial p^*(\boldsymbol{v})}{\partial\theta_1}\tag*{chain rule}\\
    &= \sum_{\boldsymbol{v}}\frac{\partial p^*(\boldsymbol{v})}{\partial\theta_1}\\
    &= \frac{\partial}{\partial\theta_1}\underbrace{\sum_{\boldsymbol{v}} p^*(\boldsymbol{v})}_{=1}=0.
\end{align*}
Therefore, we prove the equality in \eqref{eq:gradient_copy}.
\end{proof}

\begin{lemma}
\label{lemma:2}
     Given the setting in Section~\ref{sec:2.3} and Assumption~\ref{covariate shift}, let $\boldsymbol{V}=\{ X, Y, W,Z\}$ be the set of all variables, $\phi$, $\theta_1$, and $\theta_2$ be the parameter that parameterize the posterior $P(Z\vert W)$, the mechanism generating $W$ from $Z$, and the mechanism generating $Y$ from $Z$ and $X$. Furthermore, let $P(\boldsymbol{V})$ be the joint distribution learned from the source domain $\pi$ by an autoencoder as shown in Fig.~\ref{fig:arch}. The expected gradient w.r.t.\ the parameter $\theta_1$ of the log-likelihood under the target domain $\pi^{\ast}$ will be non-zero,
  \begin{equation}
    \mathbb{E}_{\pi^{\ast}}\left[\frac{\partial \mathrm{log}P(\boldsymbol{V})}{\partial \theta_1}\right] \neq 0.
  \end{equation}
\end{lemma}
\begin{proof}
Notice when we use an autoencoder to approximate $P_{\phi}(Z\vert W)$, $P_{\theta_1}(W\vert Z)$, and $P_{\theta_2}(Y\vert Z, X)$, the joint $P(\boldsymbol{V})$ can be factorized as 
    \begin{align}
    \label{eq:fac2}
    P(\boldsymbol{V}) = P_{\phi}(Z\vert W) P_{\theta_1}(W\vert Z) P_{\theta_2}(Y\vert Z,X) P(X\vert Z).    
    \end{align}
    The first term on the R.H.S.\ of~\eqref{eq:fac2} is $P_{\phi}(Z\vert W)$ instead of $P(Z)$ in~\eqref{eq:fac1}, since we do not have an observation of $P(Z)$ but use an encoder to estimate $Z$ from $W$.   
With this factorization, the expected gradient can be simplified as 
    \begin{align*}
    \mathbb{E}_{\pi^{\ast}} \left[ \frac{\partial \log P(\boldsymbol{V})}{d\theta_1} \right] 
    &= \mathbb{E}_{\boldsymbol{V} \sim P^*} \left[ \frac{\partial \log P(\boldsymbol{V})}{\partial\theta_1} \right] \\
    &= \mathbb{E}_{\boldsymbol{V} \sim P^*} \left[ \frac{\partial}{\partial\theta_1} \log P_{\phi}(Z\vert W) P_{\theta_1}(W\vert Z) P_{\theta_2}(Y\vert Z,X) P(X\vert Z) \right] \tag*{by \eqref{eq:fac2}}\\
    &= \mathbb{E}_{\boldsymbol{V} \sim P^*} \left[ \frac{\partial}{\partial\theta_1} \log \frac{P_{\theta_1}(W\vert Z)P(Z)}{P(W)}P_{\theta_1}(W\vert Z) P_{\theta_2}(Y\vert Z,X) P(X\vert Z) \right]\\
    &= \mathbb{E}_{\boldsymbol{V} \sim P^*} \Bigg[
    \underbrace{\frac{\partial}{\partial\theta_1} \log P(Z)}_{=0} 
    + 2\frac{\partial}{\partial\theta_1} \log P_{\theta_1}(W\vert Z) 
    -\frac{\partial}{\partial\theta_1} \log P(W) \notag \\ 
 &\quad + \underbrace{\frac{\partial}{\partial\theta_1} \log P_{\theta_2}(Y\vert Z,X)}_{=0} 
    + \underbrace{\frac{\partial}{\partial\theta_1} \log P(X)}_{=0} 
    \Bigg]\\
    &= \mathbb{E}_{\boldsymbol{V} \sim P^*} \left[ 2\frac{\partial}{\partial\theta_1}\log P_{\theta_1}(W\vert Z) - \frac{\partial}{\partial\theta_1} \log P(W)\right] \\
    &= \mathbb{E}_{\boldsymbol{V} \sim P^*} \left[ 2\frac{\partial}{\partial\theta_1} \log P^*_{\theta_1}(W\vert Z) - \frac{\partial}{\partial\theta_1} \log P(W) \right] \tag*{by Assumption 1} \\
    &= -\mathbb{E}_{\boldsymbol{V} \sim P^*} \left[ \frac{\partial}{\partial\theta_1} \log P(W) \right] \neq 0
\end{align*}
The last equality arises from the last six steps in the proof of Lemma~\ref{lemma:1}, and the inequality arises from the fact that $P(W)$ is affected by the parameter $\theta_1$, which can be better understood from the marginalization of $W$ in the binary model (see Appendix~\ref{Appd: Derivation for 6}). Therefore, we prove the inequality in~\eqref{eq:gradient_copy}.  
\end{proof}
Combining Lemma~\ref{lemma:1} and Lemma~\ref{lemma:2}, we prove Corollary~\ref{corol1}.
\end{proof}
\label{Appd:proof for P7}
\section{Details on the data generating mechanisms}
\label{Appd:experiment details}
In this section, we explain the details of the data-generating mechanisms that we used in the experiments.
The code for running all experiments is available at \url{https://github.com/Marksmug/CEVAE-Transportability}.

\noindent \textbf{Synthetic binary model.} To generate data for both the source domain and the target domain, we use the following data-generating process that is compatible with the causal graph in Fig.~\ref{fig:sourceG}:
    $z \sim \mathrm{Bern}(c)$, $w \vert z \sim \mathrm{Bern}(0.8z + 0.1(1-z))$,
    $y = x \oplus z$, where $c \in [0, 1]$ is the parameter to control the distribution of $Z$ and $\oplus$ is the XOR operation. Let $c_s$ and $c_t$ be parameters in the source domain and the target domain, respectively. We consider four settings: (1) $c_s = 0.9$ and $c_t = 0.5$; (2) $c_s = 0.5$ and $c_t = 0.9$; (3) $c_s = 0.9$ and $c_t = 0.1$; (4) $c_s = 0.1$ and $c_t = 0.9$. 
\\\\
\noindent \textbf{Synthetic linear Gaussian model.}
To generate data for both the source domain and the target domain, we employ the following data-generating process: $z \sim \mathcal{N}(u,1)$, $ w = a \odot z + b + \epsilon$, $y = \mathbbm{1}\{( z\geq 5 \wedge x = 0) \lor (z < 5 \wedge x = 1)\}$, where $\odot$ is the element-wise multiplication.   
\\\\
\noindent \textbf{IHDP dataset.} We first train a data-generating VAE on the IHDP dataset to generate data compatible with the causal graph in Fig.~\ref{fig:sourceG}. The data-generating VAE has an encoder and a decoder that uses a 3-layer MLP with a layer width of 30 and ELU activation. Since the IHDP covariate includes continuous variables, categorical variables, and binary variables, we structure the output layer of the decoder in a way that outputs Gaussian distributions for continuous variables, softmax functions for categorical variables, and a logit for binary variables.  We sample \num{20000} $z$ from $\mathcal{N}(0,1)$ and use the decoder of the VAE to generate the corresponding proxy $w$. The reward $y$ is then generated by the function $y=zx + \epsilon$, where $\epsilon \sim \mathcal{N}(0,1)$. Actions $x$ are generated by firstly fitting a neural network from a scalar dimension of the measurement to the treatment (action variable) in the original data and then sampling the actions from the probability given by the neural network with latent contexts $z$ as the input. To divide the dataset into the source domain and the target domain, we use rejection sampling to sample two Gaussians from \num{20000} $z$ and their corresponding $w, x, y$.  For the target domain, we only have the pair $(z,w)$ and generate $y$ based on the current $z$ and the action $x$ for each step.
\\\\
\noindent \textbf{MNIST dataset.} We first train a GAN on the MNIST dataset to generate data compatible with the causal graph in Fig.~\ref{fig:sourceG}. The generator of the data-generating GAN takes a three-dimensional Gaussian noise as the input and uses four transpose convolutional layers with batch normalization and ReLU activation for all but the last layer, which employs a tanh activation. The discriminator takes an image as input and applies four convolutional layers, each with LeakyReLU activation, outputting a flattened probability score for each image, which ranges between 0 and 1, indicating the likelihood of the image being real or fake. When generating data, we sample \num{100000} latent values from $\mathcal{N}(0,I)$ and use the generator of the GAN to generate corresponding images as $w$. The first dimension of the latent value is then chosen as the latent context $Z$. The reward $y$ and action $x$ are generated by two functions, $y=zx + \epsilon$ and $x=\mathrm{Bern}(\sigma(z+0.5))$, where $\epsilon \sim \mathcal{N}(0,1)$ and $\sigma$ is the sigmoid function. To divide the dataset into the source domain and the target domain, we use rejection sampling to sample two Gaussians from \num{100000} $z$ and their corresponding $w,x,y$. For the target domain, we only use the pair $(z,w)$ and generate $y$ based on current $z$ and action $x$ for each step.

\section{Model details}
\label{Appd:model details}
In this section, we give the details of the architecture of VAEs used in Section~\ref{sec:semi-synthetic} and their training. The Adam optimizer is used for training in the experiments.
\subsection{IHDP}
\noindent \textbf{Architecture of VAEs.} The VAEs used in this experiment have the same architecture as depicted in Fig.~\ref{fig:arch}, with the difference of two separate neural networks for $p_{\theta_2}(y\vert z,x)$ that are chosen for different $x$ for each forward process. In total, all VAEs have four neural networks: $q_{\phi}(z\vert w)$, $p_{\theta_1}(w\vert z)$, $p_{\theta_2}(y \vert z, x=0)$, and $p_{\theta_2}(y \vert z, x=1)$. Similar to the data-generating VAE, we use a 3-layer MLP with a layer width of 30 for each neural network but a different activation function, ReLU. The latent dimension for all VAEs is set to be one as the data-generating VAE. The output layer for $p_{\theta_1}(w\vert z)$ follows the same structure as the data-generating VAE, and the two neural networks predicting $y$ output a Gaussian.

\noindent \textbf{Training.} For all VAEs, we use a replay buffer $\mathcal{B}$ with capacity 1000 and batch size $M=32$. We set $\beta=0.1$ to allow for a more flexible posterior and train all VAEs with frequency list $[1000, 200, 100, 50, 25,$ $ 15, 10, 8, 6, 5]$ corresponding to the number of gradient steps $[1, 5, 10, 20, 40,$ $ 66, 100, 125, 166, 200]$ with a learning rate of \num{0.005}.

\subsection{MNIST.} 

\noindent \textbf{Architecture of VAEs.} The VAEs used in this experiment have the same number of neural networks as the ones used in the IHDP dataset. The encoder $q_{\phi}(z \vert w)$ has four transpose convolutional layers and one linear layer, all with ELU activation. The decoder $p_{\theta_1}(w \vert z)$ has one linear layer and four transpose convolutional layers, all with ELU activation except the last transpose convolutional layer. The decoder for predicting $y$ is the same as the two networks used in the IHDP dataset.

\noindent \textbf{Training.} For all VAEs, we use a replay buffer $\mathcal{B}$ with capacity 1000 and batch size $M=32$. We set $\beta=0.1$ to allow for a more flexible posterior and train all VAEs with frequency list $[1000, 200, 100, 50, 25,$ $ 15, 10, 8, 6, 5]$ corresponding to the number of gradient steps $[1, 5, 10, 20, 40,$ $ 66, 100, 125, 166, 200]$ with a learning rate of \num{0.001}.
\section{More results for IHDP and MNIST dataset}
\label{Appd: more results}
\begin{figure}[h]
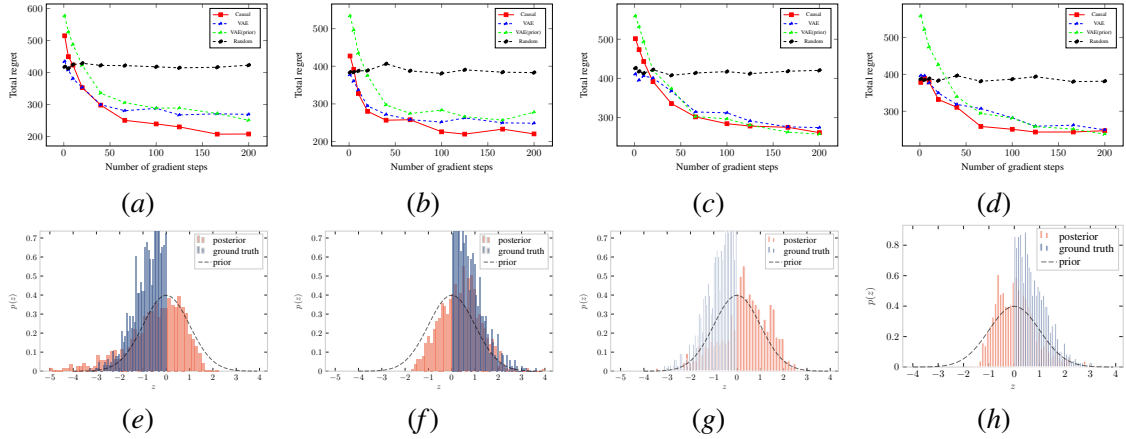

\centering
\subfigure{
\includeteximage[width=0.23\textwidth]{tikz/IHDP_+_-.tex}
\label{fig:IHDP_regret_extreme1}
}
\subfigure{
\includeteximage[width=0.23\textwidth]{tikz/IHDP_-_+.tex}
\label{fig:IHDP_regret2_extreme2}
}
\subfigure{
\includeteximage[width=0.23\textwidth]{tikz/MNIST_+_-.tex}
\label{fig:MNIST_regret_extreme1}
}
\subfigure{
\includeteximage[width=0.23\textwidth]{tikz/MNIST_-_+.tex}
\label{fig:MNIST_regret_extreme2}
}
\\
\subfigure{
    \includeteximage[width=0.23\textwidth]{tikz/IHDP_dist_z_+_-.tex}
    \label{fig:IHDP_dist_extreme1}
}
\subfigure{
    \includeteximage[width=0.23\textwidth]{tikz/IHDP_dist_z_-_+.tex}
    \label{fig:IHDP_dist_extreme2}
}
\subfigure{
    \includeteximage[width=0.23\textwidth]{tikz/MNIST_dist_z_+_-.tex}
    \label{fig:MNIST_dist_extreme2}
}
\subfigure{
    \includeteximage[width=0.23\textwidth]{tikz/MNIST_dist_z_-_+.tex}
    \label{fig:MNIST_dist_extreme2}
}

\caption{Total cumulative regrets and distribution comparison for IHDP and MNIST dataset in extreme settings.\capt{The first two columns are results for the IHDP dataset, and the last two are for the MNIST dataset. The causal agent shows a constant lower regret compared to baselines, even if there is no overlapping context between the two domains.}}
\label{fig:extremes}
\end{figure}
Motivated by the performance of the causal agent, we further test it in more extreme settings. We truncated the Gaussian distribution of context $Z$ at $z=0$ to obtain two datasets, with one including all samples with context $z<0$ and the other one including all samples with context $z\geq 0$. We refer to the former as the negative truncated Gaussian data and the latter as the positive truncated Gaussian data. For the first extreme setting, we use positive truncated Gaussian data as the source domain dataset and negative truncated Gaussian as the target domain dataset. For the second extreme setting, we swap the datasets in the previous setting to create datasets for the source domain and the target domain. In these two cases, there is no overlapping between the source domain and the target domain, and the optimal action is always the opposite in the two domains. We would expect that any knowledge transfer from the source domain is hard to be helpful for the target domain performance. We show the results for both IHDP and MNIST datasets in Fig.~\ref{fig:extremes}. The first two columns are the results for the IHDP dataset, with the first column under the first extreme setting and the second column under the second extreme setting. The last two columns are results for the MNIST dataset. 

All these results indeed show the negative transfer for VAE (prior) and the causal agent. However, we notice that it takes around 20 gradient steps for the causal agent to catch up with the performance of the VAE agent, while it takes around 40 steps for the VAE (prior) agent to catch up. Moreover, the causal agent almost shows a constant lower regret than the VAE agent after 20 gradient steps. As for the learned posterior, we did not observe a large shift from the prior to the true distribution of $Z$ in the target domain, given that there is no overlapping between $\P_Z$ and $\P_Z^{\ast}$. Nevertheless, Fig.~\ref{fig:IHDP_dist_extreme1} and Fig.\ref{fig:IHDP_dist_extreme2} shows a slight shift to $P_Z^{\ast}$ in IHDP dataset. 
\end{document}